\newcommand{\X}{\mathbf{X}}
\newcommand{\x}{\mathbf{x}}
\newcommand{\Z}{\mathbf{Z}}
\newcommand{\z}{\mathbf{z}}
\newcommand{\sa}{\mathbf{a}}
\newcommand{\A}{\mathbf{A}}
\newcommand{\W}{\mathbf{W}}
\newcommand{\w}{\mathbf{w}}
\DeclareMathOperator*{\argmax}{arg\,max}
\newcommand\independent{\protect\mathpalette{\protect\independenT}{\perp}}
\def\independenT#1#2{\mathrel{\rlap{$#1#2$}\mkern2mu{#1#2}}}
  \newcommand\figcaption{\def\@captype{figure}\caption}
  \newcommand\tabcaption{\def\@captype{table}\caption}
\newtheorem{theorem}{Theorem}
\newtheorem{definition}{Definition}
\newtheorem{corollary}{Corollary}
\newtheorem{assumption}{Assumption}
\newtheorem{proposition}{Proposition}
\newtheorem{theorem*}{Theorem}
\title{Achieving Counterfactual Fairness for Causal Bandit}
\author{
Wen Huang, Lu Zhang, Xintao Wu \\
University of Arkansas\\
\{wenhuang, lz006, xintaowu\}@uark.edu
}
\begin{document}

\maketitle

\begin{abstract}
In online recommendation, customers arrive in a sequential and stochastic manner from an underlying distribution and the online decision model recommends a chosen item for each arriving individual based on some strategy. We study how to recommend an item at each step to maximize the expected reward  while achieving user-side fairness for customers, i.e., customers  who share similar profiles will receive a similar reward regardless of their sensitive attributes and items being recommended. By incorporating causal inference into bandits and adopting soft intervention to model the arm selection strategy, we first propose the d-separation based UCB algorithm (D-UCB) to explore the utilization of the d-separation set in reducing the amount of exploration needed to achieve low cumulative regret. Based on that, we then propose the fair causal bandit (F-UCB) for achieving the counterfactual individual fairness. Both theoretical analysis and empirical evaluation demonstrate effectiveness of our algorithms.
\end{abstract}

\section{Introduction}

Fairness in machine learning has been a research subject with rapid growth recently. Many different definitions of fairness have been designed to fit different settings, e.g., equality of opportunity and equalized odds \cite{hardt2016equality}, direct and indirect discrimination \cite{zhang2017causal,zhang2018fairness,chiappa2018path}, counterfactual fairness \cite{kusner2017counterfactual,russell2017worlds,wu2019counterfactual},
and path-specific counterfactual fairness \cite{wu2019pcfairness}. Although there are many works focusing on fairness in personalized recommendation \cite{cseli2018algorithmic,liu2017calibrated,zhu2018fairness}, how to achieve individual fairness in bandit recommendation still remains a challenging task.

We focus on online recommendation, e.g., customers are being recommended items, and consider the setting where customers arrive in a sequential and stochastic manner from an underlying distribution and the online decision model recommends a chosen item for each arriving individual based on some strategy.
The challenge here is how to choose the arm at each step to maximize the expected reward  while achieving user-side fairness for customers, i.e., customers  who share similar profiles will receive similar rewards regardless of their sensitive attributes and items being recommended.

Recently researchers have started taking fairness and discrimination into consideration in the design of personalized recommendation algorithms \cite{cseli2018algorithmic,liu2017calibrated,zhu2018fairness,DBLP:conf/nips/JosephKMR16,DBLP:conf/aies/JosephKMNR18,DBLP:conf/icml/JabbariJKMR17,burke2017multisided,burke2018balanced,ekstrand2018exploring}.
Among them, \cite{DBLP:conf/nips/JosephKMR16} was the first paper of studying fairness in classic and contextual bandits. It defined fairness with respect to one-step rewards and introduced a notion of meritocratic fairness,
i.e., the algorithm should never place higher selection probability on a less qualified arm (e.g., job applicant) than on a more qualified arm.
The following works along this direction include \cite{DBLP:conf/aies/JosephKMNR18} for infinite and contextual bandits, \cite{DBLP:conf/icml/JabbariJKMR17} for reinforcement learning,  \cite{liu2017calibrated} for the simple stochastic bandit setting with calibration based fairness.
However, all existing works require some fairness constraint on arms at every round of the learning process, which is different from our user-side fairness setting. One recent work
\cite{huang2020achieving} focused on achieving user-side fairness in bandit setting, but it
only purposed a heuristic way to achieve correlation based group level fairness and didn't incorporate causal inference and counterfactual fairness into bandits.

By incorporating causal inference into bandits, we first propose the d-separation based upper confidence bound bandit algorithm (D-UCB), based on which we then propose the fair causal bandit (F-UCB) for achieving the counterfactual individual fairness. Our work is inspired by recent research on causal bandits \cite{DBLP:conf/nips/LattimoreLR16,DBLP:conf/icml/SenSDS17,DBLP:conf/nips/LeeB18,DBLP:conf/aaai/LeeB19,lu2020regret}, which studied how to learn optimal interventions sequentially by representing the relationship between interventions and outcomes as a causal graph along with associated conditional distributions. For example, \cite{lu2020regret} developed the causal UCB (C-UCB) that exploits the causal relationships between the reward and its direct parents.
However, different from previous works, our algorithms adopt soft intervention \cite{DBLP:conf/aaai/CorreaB20} to model the arm selection strategy and leverage the d-separation set identified from the underlying causal graph, thus greatly reducing the amount of exploration needed to achieve low cumulative regret. We show that our D-UCB achieves $\Tilde{O}(\sqrt{|\W| \cdot T})$ regret bound where $T$ is the number of iterations and
$\W$ is a set that d-separates arm/user features and reward $R$ in the causal graph.
As a comparison, the C-UCB achieves $\Tilde{O}(\sqrt{|Pa(R)| \cdot T})$ where $Pa(R)$ is the parental variables of $R$ that is a trivial solution of the d-separation set.
In our F-UCB, we further achieve counterfactual fairness in each round of exploration. Counterfactual fairness requires the expected reward an individual would receive keeps the same if the individual's sensitive attribute were changed to its counterpart.  The introduced counterfactual reward combines two interventions, a soft intervention on the arm selection and a hard intervention on the sensitive attribute. The F-UCB achieves counterfactual fairness in online recommendation by picking arms from a subset of arms at each round in which all the arms satisfy counterfactual fairness constraint. Our theoretical analysis shows F-UCB achieves $\Tilde{O}(\frac{\sqrt{|\mathbf{W}|T}}{\tau - \Delta_{\pi_0}})$ cumulative regret bound where $\tau$ is the fairness threshold and $\Delta_{\pi_0}$  denotes the maximum fairness discrepancy of a safe policy $\pi_0$, i.e., a policy that is fair across all rounds.

We conduct experiments on the Email Campaign data \cite{lu2020regret} whose results show the benefit of using the d-separation set from the causal graph. Our D-UCB incurs less regrets than two baselines, the classic UCB which does not leverage any causal information as well as the C-UCB.
In addition, we validate numerically that our F-UCB maintains good performance while satisfying counterfactual individual fairness in each round. On the contrary, the baselines fail to achieve fairness with significant percentages of recommendations violating fairness constraint. We further conduct experiments on the Adult-Video dataset and compare our F-UCB with  another user-side fair bandit algorithm Fair-LinUCB \cite{huang2020achieving}. The results demonstrate the advantage of our causal based fair bandit algorithm on achieving individual level fairness in online recommendation.

\section{Background}
\label{sec:background}

Our work is based on Pearl's structural causal models \cite{pearl2009causality} which describes the causal mechanisms of a system as a set of structural equations.

\begin{definition}[Structural Causal Model (SCM) \cite{pearl2009causality}]\label{def:cm}
A causal model $\mathcal{M}$ is a triple $\mathcal{M} = \langle \mathbf{U},\mathbf{V},\mathbf{F} \rangle$ where
1) $\mathbf{U}$ is a set of hidden contextual variables that are determined by factors outside the model; 	
2) $\mathbf{V}$ is a set of observed variables that are determined by variables in $\mathbf{U}\cup\mathbf{V}$;
3) $\mathbf{F}$ is a set of equations mapping from $\mathbf{U}\times \mathbf{V}$ to $\mathbf{V}$. Specifically, for each $V\in \mathbf{V}$, there is an equation $f_{V}\in \mathbf{F}$ mapping from $\mathbf{U}\times (\mathbf{V}\backslash V)$ to $V$, i.e., $v = f_{V}(Pa(V),\mathbf{u}_{V})$, where $Pa(V)$ is a realization of a set of observed variables called the parents of $V$, and $\mathbf{u}_{V}$ is a realization of a set of hidden variables.
\end{definition}

If all hidden variables in $\mathbf{U}$ are assumed to be mutually independent, then the causal model is called a Markovian model; otherwise it is called a semi-Markovian model.
In this paper, we assume the Markovian model when conducting causal inference.

Quantitatively measuring causal effects is facilitated with the $do$-operator \cite{pearl2009causality}, which simulates the physical interventions that force some variable  to take certain values. Formally, the intervention that sets the value of $X$ to $x$ is denoted by $do(x)$. In a SCM, intervention $do(x)$ is defined as the substitution of equation $x=f_{X}(Pa(X),\mathbf{u}_{X})$ with constant $X=x$.
For an observed variable $Y$ other than $X$,
its variant under intervention $do(x)$ is denoted by $Y(x)$.
The distribution of $Y(x)$, also referred to as the post-intervention distribution of $Y$, is denoted by $P(Y(x))$. The soft intervention (also known as the conditional action, policy intervention)  extends the hard intervention such that it forces variable $X$ to take a new functional relationship in responding to some other variables \cite{DBLP:conf/aaai/CorreaB20}.
Denoting the soft intervention by $\pi$, the post-interventional distribution of $X$ given its parents is denoted by $P_{\pi}(X|Pa(X))$. More generally, the new function could receive as inputs the variables other than
the original parents $Pa(X)$, as long as they are not the descendants of $X$. The distribution of $Y$ after performing the soft intervention is denoted by $P(Y(\pi))$.

With intervention, the counterfactual effect measures the causal effect while the intervention is performed conditioning on only certain individuals or groups specified by a subset of observed variables $\mathbf{O}=\mathbf{o}$. Given a context $\mathbf{O} \!=\! \mathbf{o}$, the counterfactual effect of the value change of $X$ from $x_1$ to $x_2$ on $Y$ is given by
$\mathbb{E}[Y(x_2)|\mathbf{o}] - \mathbb{E}[Y(x_1)|\mathbf{o}]$.

Each causal model $\mathcal{M}$ is associated with a causal graph $\mathcal{G}=\langle \mathbf{V},\mathbf{E} \rangle$, where $\mathbf{V}$ is a set of nodes and $\mathbf{E}$ is a set of directed edges.
Each node in $\mathcal{G}$ corresponds to a variable $V$ in $\mathcal{M}$. Each edge, denoted by an arrow $\rightarrow$, points from each member of $Pa(V)$ toward $V$ to represent the direct causal relationship specified by equation $f_{V}(\cdot)$.
The well-known d-separation criterion \cite{spirtes2000causation} connects the causal graph with conditional independence.
\begin{definition}[\textit{d}-Separation \cite{spirtes2000causation}]\label{def:d}
Consider a causal graph $\mathcal{G}$. $\mathbf{X}$, $\mathbf{Y}$ and $\mathbf{W}$ are disjoint sets of attributes. $\mathbf{X}$ and $\mathbf{Y}$ are d-separated by $\mathbf{W}$ in $\mathcal{G}$, if and only if $\mathbf{W}$ blocks all paths from every node in $\mathbf{X}$ to every node in  $\mathbf{Y}$. A path $p$ is said to be blocked by  $\mathbf{W}$ if and only if:
1) $p$ contains a chain $i\rightarrow m\rightarrow j$ or a fork $i\leftarrow m \rightarrow j$ such that the middle node $m$ is in $\mathbf{W}$, or 2) $p$ contains an collider $i\rightarrow m\leftarrow j$ such that the middle node $m$ is not in $\mathbf{W}$ and no descendant of $m$ is in $\mathbf{W}$.
\end{definition}

\section{Achieving Counterfactual Fairness in Bandit}

In this section, we present our D-UCB and F-UCB bandit algorithms.
The online recommendation is commonly modeled as a contextual multi-armed bandit problem, where each customer is a ``bandit player'', each potential item $a$ has a feature vector $\sa\in \mathcal{A}$ and there are a total number of $k$ items\footnote{We use $\sa$ to represent the feature vector of item/arm $a$, and they may be used interchangeably when the context is unambiguous.}. For each customer arrived at time $t\in [T]$ with feature vector $\mathbf{x}_t\in \mathcal{X}$, the algorithm recommends an item with features $\sa$ based on vector $\mathbf{x}_{t,a}$ which represents the concatenation of the  user and the item feature vectors ($\mathbf{x}_{t}$, $\sa$), observes the reward $r_t$ (e.g., purchase), and then updates its recommendation strategy with the new observation. There may also exist some intermediate features (denoted by $\mathbf{I}$) that are affected by the recommended item and influence the reward, such as the user feedback about relevance and quality.

\subsection{Modeling Arm Selection via Soft Intervention}

In bandit algorithms, we often choose an arm that maximizes  the expectation of the conditional reward, $a_t = \argmax_{a} {\mathbb{E}[R| \mathbf{x}_{t,a}]}$. The arm selection strategy could be implemented by a functional mapping from $\mathcal{X}$ to $\mathcal{A}$, and after each round the parameters in the function get updated with the newest observation tuple.

We advocate the use of the causal graph and soft interventions as a general representation of any bandit algorithm. We consider the causal graph $\mathcal{G}$, e.g., as shown in Figure \ref{fig:cbandit}, where $\A$ represents the arm features, $\X$ represents the user features, $R$ represents the reward, and $\mathbf{I}$ represents some intermediate features between $\A$ and $R$. Since the arm selection process could be regarded as the structural equation of $\X$ on $\A$,
\begin{wrapfigure}{r}{0.35\textwidth}\centering
\vspace*{-2.0ex}
\begin{minipage}{0.35\textwidth}
\small
\centering
   \includegraphics[scale=1.0]{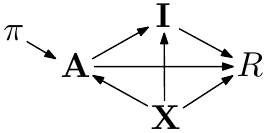}
       \caption{Graph structure for contextual bandit recommendation. $\pi$  denotes the soft intervention conducted on arm selection.}
       \vspace*{-2.0ex}
   \label{fig:cbandit}
\end{minipage}
\end{wrapfigure}
we treat $\X$ as $\A$'s parents. Then, the reward $R$ is influenced by the arm selection, the contextual user features, as well as some intermediate features, so all the three factors are parents of $R$. In this setting, it is natural to treat the update of the arm selection policy as a soft intervention $\pi$ performed on the arm features $\A$. Each time when an arm selection strategy is learned, the corresponding soft intervention is considered to be conducted on $\A$ while user features $\mathbf{X}$ and all other relationships in the causal graph are unchanged.

There are several advantages of modeling arm selection learning using the soft intervention. First, it can capture the complex causal relationships between context and reward without introducing strong assumptions, e.g., linear reward function, or Gaussian/Bernoulli prior distribution, which are often not held in practice. Second, it is flexible in terms of the functional form. For example, it can be of any function type, and it can be independent or dependent upon the target variable's existing parents and can also include new variables that are not the target variable's parents. Third, the soft intervention can be either deterministic, i.e., fixing the target variable to a particular constant, or stochastic, i.e., assigns to the target variable a distribution with probabilities over multiple states. As a result, most existing and predominant bandit algorithms could be described using this framework. Moreover, based on this framework we could propose new bandit algorithms by adopting different soft interventions.

Formally, let $\Pi_t$ be the arm selection policy space at time $t \in [T]$, and $\pi \in \Pi_t$ be a specific policy. The implementation of policy $\pi$ is modeled by a soft intervention. Denoting by $R(\pi)$ the post-interventional value of the reward after performing the intervention, the expected reward under policy $\pi$, denoted by $\mu_{\pi}$, is given by $\mathbb{E}[R(\pi)| \mathbf{x}_t]$. According to the $\sigma$-calculus \cite{DBLP:conf/aaai/CorreaB20}, it can be further decomposed as follows:
\begin{equation}\label{eq:reward-expectation}
\begin{split}
\mu_{\pi} = \mathbb{E}[R(\pi)| \mathbf{x}_{t}] = \sum_{\sa} P_{\pi}(\sa|\mathbf{x}_{t}) \cdot \mathbb{E}[R(\sa)|\mathbf{x}_{t}] = \mathbb{E}_{\sa \sim \pi} \left[ \mathbb{E}[R(\sa)|\mathbf{x}_{t}] \right]
\end{split}
\end{equation}
where $P_{\pi}(\sa|\mathbf{x}_{t})$ is a distribution defined by policy $\pi$.
As can be seen, once a policy is given, the estimation of $\mu_{\pi}$ depends on the estimation of $\mathbb{E}[R(\sa)|\mathbf{x}_{t}]$ (denoted by $\mu_{a}$). Note that $\mu_{a}$ represents the expected reward when selecting an arm $a$, which is still a post-intervention quantity and needs to be expressed using observational distributions in order to be computable. In the following, we propose a d-separation based estimation method and based on which we develop our D-UCB algorithm. For the ease of representation, our discussions in Sections 3.2, 3.3 and 3.4 assume deterministic policies but in principle the above framework could be applied to stochastic policies as well.

\subsection{D-UCB Algorithm}

Let $\W\subseteq \A\cup\X\cup\mathbf{I}$ be a subset of nodes that d-separates reward $R$ from features $(\A\cup\X) \backslash \W$ in the causal graph.
Such set always exists since $\A\cup\X$ and $Pa(R)$ are trivial solutions.
Let $\Z = \W \backslash (\A\cup \X)$.
Using the do-calculus \cite{pearl2009causality}, we can decompose $\mu_{a}$ as follows.
\begin{equation}\label{eq:mua}
\begin{split}
    \mu_{a} & = \mathbb{E}[R|do(\sa),\mathbf{x}_{t}] = \sum_{\mathbf{Z}} \mathbb{E}[R|\mathbf{z},do(\sa),\x_t] P(\mathbf{z}| do(\sa),\mathbf{x}_{t}) \\
    & = \sum_{\mathbf{Z}} \mathbb{E}[R|\mathbf{z},\sa,\x_t] P(\mathbf{z}| \sa, \mathbf{x}_{t})
    = \sum_{\mathbf{Z}} \mathbb{E}[R|\mathbf{z},\sa,\x_t]  P(\mathbf{z}| \x_{t,a})
     = \sum_{\mathbf{Z}} \mathbb{E}[R|\mathbf{w}]  P(\mathbf{z}| \x_{t,a})
\end{split}
\end{equation}
where the last step is due to the d-separation.
Similarly to \cite{lu2020regret}, we assume that distribution $P(\mathbf{z}| \x_{t,a})$ is known based on previous knowledge that was used to build the causal graph.
Then, by using a sample mean estimator (denoted by $\hat{\mu}_{\mathbf{w}}(t)$) to estimate $\mathbb{E}[R|\mathbf{w} ]$ based on the observational data up to time $t$, the estimated reward mean is given by
\begin{align}
    \hat{\mu}_{\pi}(t) &=
     \mathbb{E}_{\sa \sim \pi} \left[ \sum_{\Z} \hat{\mu}_{\mathbf{w}}(t) \cdot P(\z| \mathbf{x}_{t,a}) \right]
\label{estimated mean}
\end{align}

Subsequently, we propose a causal bandit algorithm based on d-separation, called D-UCB. Since there is always uncertainty on the reward given a specific policy, in order to balance exploration and exploitation we follow the rule of optimistic in the face of uncertainty (OFU) in D-UCB algorithm. The policy taken at time $t$ will lead to the highest upper confidence bound of the expected reward, which is given by
\begin{equation}\label{eq:pi_t}
\pi_t = \argmax_{\pi \in \Pi_t}\mathbb{E}_{\sa \sim \pi}[UCB_{a}(t)]
\end{equation}
\begin{equation}\label{eq:ucba}
  UCB_{a}(t) =  \sum_{\Z} UCB_{\mathbf{w}}(t) P(\mathbf{z}|\x_{t,a})
\end{equation}
Since $\hat{\mu}_{\mathbf{w}}(t)$ is an unbiased estimator and the error term of the reward is assumed to be sub-Gaussian distributed, the $1-\delta$ upper confidence bound of $\mu_{\mathbf{w}}(t)$  is given by
\begin{equation}
 UCB_{\mathbf{w}}(t) = \hat{\mu}_{\mathbf{w}}(t) + \sqrt{\frac{2\log(1/\delta)}{1 \vee N_{\mathbf{w}}(t)}}
\end{equation}
After taking the policy, we will have new observations on $r_t$ and $\w_t$. The sample mean estimator is then updated accordingly:
\begin{equation}\label{eq:update}
    \hat{\mu}_{\mathbf{w}}(t) =  \frac{1}{T_{\mathbf{w}}(t)} \sum_{k=1}^{t} r_t  \mathbbm{1}_{\mathbf{w}_k = \mathbf{w}} ~~ \textrm{where}~~  T_{\mathbf{w}}(t) = \sum_{k=1}^{t} \mathbbm{1}_{\mathbf{w}_k = \mathbf{w}}
\end{equation}

We hypothesize that the choice of d-separation set $\W$ would significantly affect the regret of the D-UCB. To this end, we analyze the upper bound of the cumulative regret $\mathcal{R}_T$.
The following theorem shows that, the regret upper bound depends on the domain size of d-separation set $\W$.

\begin{theorem}[Regret bound of D-UCB]\label{thm:rb}
Given a causal graph $\mathcal{G}$, with probability at least $1-2\delta T |\mathbf{W}| - \exp(- \frac{|\mathbf{W}| \log^3(T)}{32\log(1/\delta)})$, the regret of D-UCB is bounded by
\[   \mathcal{R}_T \leq \sqrt{|\mathbf{W}| T\log(T)}log(T) + \sqrt{32|\mathbf{W}|T \log(1/\delta)}   \]
 where $|\mathbf{W}|$ is the domain space of set $\W$.
\label{causalucb_1}
\end{theorem}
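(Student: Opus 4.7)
The plan is to follow the standard optimism-in-the-face-of-uncertainty (OFU) recipe, but decomposed through the d-separation identity \eqref{eq:mua} so that every sample effectively refreshes a single entry of the $|\W|$-cell table of conditional means $\{\mu_\w : \w \in \W\}$. Because each round updates only the observed cell $\w_t$, the cumulative regret should scale with $|\W|$ rather than with the number of arms or the dimensionality of $(\A,\X)$. The two additive terms in the bound will correspond respectively to (i) the confidence-width summation via Cauchy--Schwarz and (ii) a martingale concentration of the realized counts around their policy-induced expectations.

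First I would concentrate $\hat\mu_\w(t)$ around $\mu_\w$. For any fixed $\w$ the samples entering \eqref{eq:update} are i.i.d.\ sub-Gaussian draws of $R\mid\w$, so a standard anytime Hoeffding bound (with a peeling step over the random sample size $N_\w(t)$) yields $|\hat\mu_\w(t)-\mu_\w|\le \sqrt{2\log(1/\delta)/(1\vee N_\w(t))}$ with probability at least $1-2\delta$ uniformly in $t$. A union bound over $\w\in\W$ and $t\in[T]$ produces the good event $\mathcal{E}$ with $\Pr(\mathcal{E})\ge 1-2\delta T|\W|$, accounting for the first failure term in the statement.

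Under $\mathcal{E}$, the identity $\mu_a = \sum_{\z} P(\z\mid\x_{t,a})\mu_\w$ gives $\mu_a\le UCB_a(t)$ for every arm $a$, and the optimistic rule \eqref{eq:pi_t} yields
\[
\mu_{\pi^*}-\mu_{\pi_t} \;\le\; \mathbb{E}_{\sa\sim\pi_t}\!\bigl[UCB_a(t)-\mu_a\bigr] \;\le\; 2\,\mathbb{E}_{\sa\sim\pi_t}\!\Bigl[\sum_{\z} P(\z\mid\x_{t,a})\sqrt{\tfrac{2\log(1/\delta)}{1\vee N_\w(t)}}\Bigr].
\]
Writing $\tilde p_t(\w) = \mathbb{E}_{\sa\sim\pi_t}[P(\z\mid\x_{t,a})]$ for the marginal probability that cell $\w$ is observed at round $t$, I would replace the policy-weighted sum $\sum_\w \tilde p_t(\w)/\sqrt{1\vee N_\w(t)}$ by its realized counterpart $1/\sqrt{1\vee N_{\w_t}(t)}$, whose cumulative sum telescopes via $\sum_{t=1}^{T}1/\sqrt{1\vee N_{\w_t}(t)} \le \sum_{\w} 2\sqrt{N_\w(T)} \le 2\sqrt{|\W|T}$ by Cauchy--Schwarz together with $\sum_\w N_\w(T)=T$. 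Multiplying by the prefactor $2\sqrt{2\log(1/\delta)}$ delivers the $\sqrt{32|\W|T\log(1/\delta)}$ term.

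The main obstacle is the martingale-replacement step. The differences $\tilde p_t(\w)/\sqrt{1\vee N_\w(t)} - \mathbbm{1}\{\w_t=\w\}/\sqrt{1\vee N_\w(t)}$ are bounded and have conditional variance at most $\sum_\w \tilde p_t(\w)/(1\vee N_\w(t))$, whose cumulative sum across rounds is dominated by the harmonic bound $|\W|(1+\log T)$. Applying a Freedman/Bernstein-type inequality with this sharper variance proxy, rather than the crude $\sqrt{T\log(1/\delta')}$ Azuma bound, yields an additive deviation of order $\sqrt{|\W|T\log T}\log T$ at failure probability $\exp\!\bigl(-|\W|\log^3 T/(32\log(1/\delta))\bigr)$, matching the remaining additive term and the second failure event. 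Verifying that the variance proxy scales with $|\W|\log T$ rather than with $T$---which is exactly what the d-separation-based factorization buys us---is what keeps the leading regret at $\sqrt{|\W|T}$ up to logarithmic factors, and is the technical heart of the argument.
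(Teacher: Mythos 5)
Your proposal follows essentially the same route as the paper: the same anytime-Hoeffding good event with failure probability $2\delta T|\W|$, the same OFU step killing $\mu_{a^*}-UCB_{a_t}(t)$, the same decomposition of the confidence-width sum into a realized-indicator part (telescoped via $\sum_{\w}2\sqrt{N_{\w}(T)}\le 2\sqrt{|\W|T}$ by Cauchy--Schwarz, giving $\sqrt{32|\W|T\log(1/\delta)}$) plus a martingale correction $\sum_t\sum_{\w}\bigl(P(\z|\x_{t,a_t})-\mathbbm{1}\{\w_t=\w\}\bigr)\sqrt{8\log(1/\delta)/(1\vee N_{\w}(t))}$ accounting for the $\sqrt{|\W|T\log(T)}\log(T)$ term and the second failure event.

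The one place you diverge is the concentration of that correction martingale, and your version has a gap. You invoke a Freedman/Bernstein bound with the variance proxy $\sum_t\sum_{\w}\tilde p_t(\w)/(1\vee N_{\w}(t))$ and claim it is dominated by $|\W|(1+\log T)$ via a harmonic-sum argument. That harmonic bound is valid for the \emph{realized} sum $\sum_t \mathbbm{1}\{\w_t=\w\}/(1\vee N_{\w}(t))$, but not pathwise for the policy-probability-weighted sum: if the policy persistently assigns mass $\tilde p_t(\w)$ to a cell that happens not to be realized, $N_{\w}(t)$ stays small while the weighted sum grows linearly in $T$. Making this rigorous would require another self-normalized or concentration argument, which becomes circular as stated. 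The fix is also the reason the refinement is unnecessary: the paper simply applies Azuma with the worst-case increment bound $|M_t-M_{t-1}|\le\sqrt{32\log(1/\delta)}$ (each increment is a difference of two sub-probability-weighted averages of terms of size at most $\sqrt{8\log(1/\delta)}$), and plugging $\epsilon=\sqrt{|\W|T\log(T)}\log(T)$ into $\exp(-\epsilon^2/(2\sum_k c_k^2))$ already yields the stated failure probability $\exp(-|\W|\log^3(T)/(32\log(1/\delta)))$ without any variance-adaptive machinery. Replace your Freedman step with this plain Azuma argument and the proof closes.
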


\begin{proof}[Proof Sketch]
The proof of Theorem \ref{causalucb_1} follows the general regret analysis framework of the UCB algorithm \cite{auer2002finite}. By leveraging d-separation decomposition of the expected reward,  we split the cumulative regret into two terms and bound them separately. Since there are less terms to traverse when summing up and bounding the uncertainty caused by exploration-exploitation strategy,  D-UCB is supposed to obtain lower regret than the original UCB algorithm and  C-UCB algorithm.
By setting $\delta = 1/T^2$, it is easy to show that D-UCB algorithm achieves $\Tilde{O}(\sqrt{|\W| \cdot T})$ regret bound.
Please refer to Appendix B in the supplementary file for proof details.
\end{proof}

\begin{algorithm}[t]\small
	\caption{D-UCB: Causal Bandit based on d-separation}
	\begin{algorithmic}[1]
		\STATE  \text{Input:}  Policy space $\Pi$, confidence level parameter $\delta$, original causal Graph $\mathcal{G}$ with domain knowledge
		\STATE Find the d-separation set $\mathbf{W}$ with minimum subset $\Z$ in terms of domain space.
	    \FOR {$t = 1,2,3,...,T$}
	    \smallskip
	    \STATE Obtain the optimal policy $\pi_t$ following Eq.~\eqref{eq:pi_t}.
	    \STATE Take action $\sa_t \sim \pi_t $ and observe a real-valued payoff $r_{t}$ and a d-separation set value $\mathbf{w}_t$.
	    \STATE Update $\hat{\mu}_{\mathbf{w}}(t)$ for all $\w \in \W$ following Eq.~\eqref{eq:update}.
	    \ENDFOR
	\end{algorithmic}
	\label{alg:ducb}
\end{algorithm}

Algorithm \ref{alg:ducb} shows the pseudo code of the D-UCB. In Line 2, according to Theorem \ref{thm:rb}, we first determine the d-separation set $\mathbf{W}$ with the minimum domain space. In Line 4 we leverage causal graph and the observational data up to time $t$   to find the optimal policy $\pi_t = \argmax_{\pi \in \Pi_t}\mathbb{E}_{\sa \sim \pi}[UCB_{a}(t)]$. In Line 5, we take action $\sa_t \sim \pi_t $ and observe a real-valued payoff $r_t$, and in Line 6, we update the observational data with $\sa_t$ and $r_t$.

{\noindent\bf Remark.} Determining the minimum d-separation set has been well studied in causal inference \cite{geiger1990d}.  We leverage the algorithm of finding a minimum cost separator \cite{tian1998finding} to identify $\W$. The discovery procedure usually requires the complete knowledge of the causal graph. However, in the situation where the d-separation set to be used as well as the associated conditional distributions $P(\mathbf{z}|\x_{t,a})$ are given, the remaining part of the algorithm will work just fine without the causal graph information. Moreover, the assumption of knowing $P(\mathbf{z}|\mathbf{x_{t,a}})$ follows recent research works on causal bandit. Generalizing the causal bandit framework to partially/completely unknown causal graph setting is a much more challenging but important task.  A recent work \cite{lu2021causal} tries to generalize causal bandit algorithm based on causal  trees/forests structure.

To better illustrate the long-term regret of causal bandit algorithm, suppose the set $\mathbf{A} \cup \mathbf{U} \cup \mathbf{I}$ includes $N$  variables that are related to the reward and the d-separation set $\mathbf{W}$ includes $n$ variables. If each of the variable takes on 2 distinct values,  the number of deterministic policies can be as large as $2^N$  for traditional bandit algorithm, leading to a $\mathcal{O}(\sqrt{2^NT})$ regret bound. On the other hand, our proposed causal algorithms exploit the knowledge of the d-separation set $\mathbf{W}$ and achieves $\mathcal{O}(\sqrt{2^nT})$ regret, which implies a significant reduction regarding to the regret bound if $n << N$. If the number of arm candidates is much smaller than the domain space of $\mathbf{W}$, our bound analysis could be easily adjusted to this case using a subspace of $\mathbf{W}$ that corresponds to the arm candidates.

\subsection{Counterfactual Fairness}

Now, we are ready to present our fair UCB algorithm.
Rather than focusing on the fairness of the item being recommended (e.g., items produced by small companies have similar chances of being recommended as those from big companies), we focus on the user-side fairness in terms  of reward, i.e., individual users  who share similar profiles will receive similar rewards regardless of their sensitive attributes and items being recommended such that they both benefit from the  recommendations equally. To this end, we adopt counterfactual fairness as our fairness notion.

Consider a sensitive attribute $S\in \X$ in the user's profile.
Counterfactual fairness concerns the expected reward an individual would receive assuming that this individual were in different sensitive groups. In our context, this can be formulated as the counterfactual reward $\mathbb{E}[R(\pi,s^{*})|\mathbf{x}_t]$ where two interventions are performed simultaneously: soft intervention $\pi$ on the arm selection and hard intervention $do(s^{*})$ on the sensitive attribute $S$, while conditioning on individual features $\mathbf{x}_t$. Denoting by $\Delta_{\pi} =
  \mathbb{E}[R(\pi, s^+)| \mathbf{x}_t ]  -  \mathbb{E}[R(\pi, s^-)|\mathbf{x}_t]$ the counterfactual effect of $S$ on the reward, a policy that is counterfactually fair is defined as follows.

\begin{definition}
A policy $\pi$ is counterfactually fair for an individual arrived if $\Delta_{\pi}=0$. The policy is $\tau$- counterfactually fair if $\left |\Delta_{\pi} \right| \leq \tau$
where $\tau$ is the predefined fairness threshold.
 \label{def:cf}
\end{definition}

To achieve counterfactual fairness in online recommendation, at round $t$, we can only pick arms from a subset of arms for the customer (with feature $\mathbf{x}_t$), in which all the arms satisfy counterfactual fairness constraint. The fair policy subspace $\Phi_t \subseteq \Pi_t$ is thus given by $\Phi_t  = \{\pi : \Delta_{\pi} \leq \tau \} $.

However, the counterfactual fairness is a causal quantity that is not necessarily unidentifiable from observational data without the knowledge of structure equations \cite{shpitser2008complete}. In \cite{wu2019counterfactual}, the authors studied the criterion of identification of counterfactual fairness given a causal graph and provided the bounds for unidentifiable counterfactual fairness.
According to Proposition 1 in \cite{wu2019counterfactual}, our counterfactual fairness is identifiable if $\X \backslash \{S\} $ are not descendants of $S$. In this case,
similar to Eq.~\eqref{eq:reward-expectation}, we have that $\mathbb{E}[R(\pi, s^*)| \mathbf{x}_t ] = \mathbb{E}_{\sa \sim \pi} \left[ \mathbb{E}[R(\sa,s^{*})|\mathbf{x}_{t}] \right]$ where $s^* \in \{s^+, s^-\}$. Similar to Eq.~\eqref{eq:mua}, we denote $\mu_{a,s^*} = \mathbb{E}[R(a,s^*)|\mathbf{x}_{t}]$, which can be decomposed using the do-calculus as
\begin{align}
\mu_{a,s^*} = \mathbb{E}[R(a,s^*)|\mathbf{x}_{t}] = \sum_{\Z} \mathbb{E}[R| s^{*},\mathbf{w}\backslash s_t] \cdot  P(\mathbf{z}|s^*, \mathbf{x}_{t,a} \backslash s_t)
\label{cf}
\end{align}
where $\mathbf{w}\backslash s_t$ and $\mathbf{x}_{t,a} \backslash s_t$ represent all values in $\mathbf{w}$ and $\mathbf{x}_{t,a}$ except $s_t$ respectively. Note that $s^*$ is the sensitive attribute value in the counterfactual world which could be different from the observational value $s_t$. The estimated counterfactual reward can be calculated as
\begin{align}
    \hat{\mu}_{a,s^*}(t) = \sum_{\Z} \hat{\mu}_{\mathbf{w}^{*}}(t) \cdot  P(\mathbf{z}|s^*, \mathbf{x}_{t,a} \backslash s_t)
\end{align}
where $\mathbf{w}^{*}=\{s^{*},\mathbf{w}\backslash s_t\}$ and $\hat{\mu}_{\mathbf{w}^{*}}(t)$ is again the sample mean estimator based on the observational data up to time $t$.
The estimated counterfactual discrepancy of a policy is
\begin{equation}\label{estimated cf}
\hat{\Delta}_{\pi}(t) =  \left| \mathbb{E}_{\sa \sim \pi}[\hat{\mu}_{a,s^+}(t) ]  -  \mathbb{E}_{\sa \sim \pi}[\hat{\mu}_{a,s^-}(t)]  \right|
\end{equation}

In the case where
$\mu_{a,s^*}$ is not identifiable, based on Proposition 2 in \cite{wu2019counterfactual} we derive the lower and upper bounds of $\mu_{a,s^*}$ as presented in the following theorem.
\begin{theorem}
Given a causal graph as shown in Figure \ref{fig:cbandit}, if there exists a non-empty set $\mathbf{B} \subseteq \X \backslash \{S\} $ which are descendants of $S$, then
$\mu_{a,s^*}=\mathbb{E}[R(a,s^*)|\mathbf{x}_{t}]$ is bounded by
\begin{equation}
\sum_{\Z}\min_{\mathbf{b}}\{ \mathbb{E}[R|s^{*},\mathbf{w}\backslash s_t] \} \cdot P(\z|\mathbf{x}_{t,a}) \leq \mu_{a,s^*} \leq \sum_{\Z}\max_{\mathbf{b}}\{ \mathbb{E}[R|s^{*},\mathbf{w}\backslash s_t] \} \cdot P(\z|\mathbf{x}_{t,a})
\end{equation}
\end{theorem}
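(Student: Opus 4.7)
The plan is to lift the counterfactual expectation into an integral over the (unknown) counterfactual value of $\mathbf{B}$, reduce the inner term to a standard backdoor-style observational formula, and then bound a convex combination by its pointwise extremes.

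First I would decompose $\mu_{a,s^*}$ by conditioning on the counterfactual value of the descendant set. Writing $\mathbf{B}(s^*)$ for the potential outcome of $\mathbf{B}$ under the hard intervention $do(s^*)$, the law of total probability gives
\begin{equation*}
    \mu_{a,s^*}
      = \sum_{\mathbf{b}} P\!\left(\mathbf{B}(s^*)=\mathbf{b} \,\big|\, \mathbf{x}_t\right) \cdot \mathbb{E}\!\left[R(a,s^*) \,\big|\, \mathbf{x}_t,\mathbf{B}(s^*)=\mathbf{b}\right].
\end{equation*}
Once we have this decomposition, the weights $P(\mathbf{B}(s^*)=\mathbf{b}\mid \mathbf{x}_t)$ form a probability distribution over $\mathbf{b}$, which is precisely the object that cannot be identified from observational data.

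Next I would argue that, conditional on $\mathbf{B}(s^*)=\mathbf{b}$, the inner counterfactual expectation collapses to an observational quantity. In a Markovian SCM, once the values of the descendants of $S$ that lie on every directed path from $S$ to $R$ are pinned down, the only remaining causal flow is already blocked by $\mathbf{W}$. Hence, mimicking the derivation of Eq.~\eqref{eq:mua} and Eq.~\eqref{cf}, repeated application of the do-calculus combined with the $\mathbf{W}$-based d-separation yields
\begin{equation*}
    \mathbb{E}\!\left[R(a,s^*)\,\big|\,\mathbf{x}_t,\mathbf{B}(s^*)=\mathbf{b}\right]
      = \sum_{\Z} \mathbb{E}\!\left[R\,\big|\, s^*, \mathbf{w}\backslash s_t\right] \cdot P(\z\mid \mathbf{x}_{t,a}),
\end{equation*}
where $\mathbf{w}\backslash s_t$ is interpreted with the counterfactual value $\mathbf{b}$ substituted for any component of $\mathbf{W}$ that lies in $\mathbf{B}$. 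This identification step is the one I expect to be the main obstacle: we need to invoke Proposition 2 of \cite{wu2019counterfactual} (or rederive it from the composition and consistency axioms on the twin network of $\mathcal{G}$) to justify collapsing the conditional counterfactual into a conditional expectation whose only dependence on $\mathbf{b}$ is through the slot in $\mathbf{w}\backslash s_t$, and to justify that the $\Z$-factor $P(\z\mid \mathbf{x}_{t,a})$ does not also need to be indexed by $\mathbf{b}$.

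Finally, I would combine the two displayed identities, swap the order of summation, and push min/max past the unknown counterfactual distribution:
\begin{equation*}
    \mu_{a,s^*}
      = \sum_{\Z} P(\z\mid \mathbf{x}_{t,a}) \sum_{\mathbf{b}} P\!\left(\mathbf{B}(s^*)=\mathbf{b}\,\big|\,\mathbf{x}_t\right)\mathbb{E}\!\left[R\,\big|\, s^*,\mathbf{w}\backslash s_t\right].
\end{equation*}
The inner $\mathbf{b}$-sum is a convex combination (its weights sum to one), so it lies between $\min_{\mathbf{b}}\mathbb{E}[R\mid s^*,\mathbf{w}\backslash s_t]$ and $\max_{\mathbf{b}}\mathbb{E}[R\mid s^*,\mathbf{w}\backslash s_t]$. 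Substituting these bounds termwise, and using non-negativity of $P(\z\mid \mathbf{x}_{t,a})$ to preserve the inequalities under the outer $\Z$-sum, immediately yields the stated lower and upper bounds. The tightness of having $\min_{\mathbf{b}}$ and $\max_{\mathbf{b}}$ \emph{inside} the $\Z$-sum (rather than outside) comes precisely from this ordering of summation, which is why the swap in the last step is essential.
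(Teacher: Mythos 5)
Your overall strategy---decompose $\mu_{a,s^*}$ over the unknown counterfactual values of $\mathbf{B}$, observe that the resulting $\mathbf{b}$-weights form a probability distribution, and bound the convex combination by its pointwise extremes inside the $\Z$-sum---is the right idea, and it is essentially the mechanism underlying the bound you would be citing. However, there is a genuine gap at the step you yourself flag as ``the main obstacle'': the claimed identity $\mathbb{E}[R(a,s^*)\mid\mathbf{x}_t,\mathbf{B}(s^*)=\mathbf{b}] = \sum_{\Z}\mathbb{E}[R\mid s^*,\mathbf{w}\backslash s_t]\cdot P(\z\mid\x_{t,a})$ is asserted rather than derived, and it is exactly where all the difficulty of the theorem lives. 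It is not a direct consequence of Proposition 2 of \cite{wu2019counterfactual}, which supplies a bound with the $\max_{\mathbf{b}}$ already applied and with \emph{observational} weights $P(\mathbf{i}\mid\x_{t,a})$ ranging over the full intermediate set $\mathbf{I}$, not a conditional-on-$\mathbf{B}(s^*)$ identity over $\Z$ alone. In particular, you never justify why the $\Z$-factor is the factual $P(\z\mid\x_{t,a})$ (conditioning on $s_t$, not $s^*$) rather than a cross-world quantity that could itself depend on $\mathbf{b}$ and $s^*$ --- note that in the identifiable case (Eq.~\eqref{cf}) the analogous factor is $P(\z\mid s^*,\x_{t,a}\backslash s_t)$, so the switch to $P(\z\mid\x_{t,a})$ in the unidentifiable bound is not cosmetic. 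You also never account for the intermediate variables $\mathbf{I}\backslash\Z$.

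The paper's proof sidesteps this by taking Proposition 2's inequality as its starting point, $\mathbb{E}[R(a,s^*)\mid\mathbf{x}_t]\le\sum_{\mathbf{I}}P(\mathbf{i}\mid\x_{t,a})\max_{\mathbf{b}}\{\mathbb{E}[R\mid s^*,\x_{t,a}\backslash s_t,\mathbf{i}]\}$, and then doing only bookkeeping: splitting $\mathbf{I}$ into $\Z$ and $\mathbf{I}\backslash\Z$, using the d-separation property of $\W$ to shrink the conditioning set to $\mathbf{w}\backslash s_t$, verifying that $\W\cap(\mathbf{I}\backslash\Z)=\emptyset$ (so the max no longer depends on $\mathbf{i}\backslash\z$), and marginalizing out $\mathbf{I}\backslash\Z$. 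To repair your argument you would either need to carry out that same reduction, or genuinely rederive the conditional identification from the twin-network/consistency axioms --- which is a substantially harder task than the one-line appeal you make to it.
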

Please refer to Appendix C of the supplementary file for the proof.

\subsection{F-UCB Algorithm}

Taking the estimation error of the counterfactual discrepancy into consideration, we could also use the high probability upper confidence bound of the counterfactual effect to build the conservative fair policy subspace
$\bar{\Phi}_t  = \{\pi : UCB_{\Delta_{\pi}}(t) \leq \tau \}$ where
\begin{equation}
UCB_{\Delta_{\pi}}(t) = \hat{\Delta}_{\pi}(t) + \sum_{\Z}\sqrt{\frac{8\log(1/\delta)}{1 \vee N_{\mathbf{w}}(t)}}P(\mathbf{z}|\mathbf{x}_{t,a})
\label{ucb_pi}
\end{equation}
which is derived based on the fact that the sum of two independent sub-Gaussian random variables is still sub-Gaussian distributed.
Thus, the learning problem can be formulated as the following constrained optimization problem:
\begin{equation}
\min \mathcal{R}_T = \sum_{t=1}^{T} \left( \mathbb{E}_{\sa \sim \pi_t^*}[\mu_{a}] - \mathbb{E}_{\sa \sim \pi_t}[\mu_{a}] \right) ~~ \mathrm{s.t.}  ~~ \forall t, ~ \pi_t\in \bar{\Phi}_t,
\end{equation}
where $\pi_t^*$ is defined as the optimal policy in the policy space $\Pi_t$ at each round, which is the same in D-UCB setting. The Assumption 3 in Appendix A gives the definition of a safe policy $\pi_0$, which refers to a feasible solution under the fair policy subspace at each round, i.e., $\pi_0 \in \Pi_t$ such that $\Delta_{\pi_0} \leq \tau$ for each $t \in [T]$.

This optimization can be solved similarly by following the rule of OFU.
Algorithm \ref{alg:fucb} depicts our fair bandit algorithm called the F-UCB. Different from the D-UCB algorithm, F-UCB only picks arm from $\bar{\Phi}_t$ at each time $t$.
In Line 5, we compute the estimated reward mean and the estimated fairness discrepancy. In Line 6, we determine the fair policy subspace $\bar{\Phi}_t$, and in Line 7, we find the optimal policy $\pi_t = \argmax_{\pi \in \bar{\Phi}_t}\mathbb{E}_{\sa \sim \pi}[UCB_{a}(t)]$.

\begin{algorithm}[h]\small
	\caption{F-UCB: Fair Causal Bandit}
	\begin{algorithmic}[1]
		\STATE \text{Input:} Policy space $\Pi$, fairness threshold $\tau$,  confidence level parameter $\delta$, original causal Graph $\mathcal{G}$ with domain knowledge
		\STATE Find the d-separation set $\mathbf{W}$ with minimum subset $\Z$ in terms of domain space.
	    \FOR {$t = 1,2,3,...,T$}
        \FOR {$\pi \in \Pi_t$}
        \STATE Compute the estimated reward mean using Eq.~\eqref{estimated mean} and the estimated fairness discrepancy using Eq.~\eqref{estimated cf}.
        \ENDFOR
	    \smallskip
	    \STATE Determine the conservative fair policy subspace $\bar{\Phi}_t$.
	    \STATE Find the optimal policy following Eq.~\eqref{eq:pi_t} within $\bar{\Phi}_t$.
	    \STATE Take action $\sa_t \sim \pi_t $ and observe a real-valued payoff $r_t$ and a d-separation set value $\mathbf{w}_t$.
	    \STATE Update $\hat{\mu}_{\mathbf{w}}(t)$ for all $\w \in \W$.
	    \ENDFOR
	\end{algorithmic}
	\label{alg:fucb}
\end{algorithm}

The following regret analysis shows that, the regret bound of F-UCB is larger than that of D-UCB as expected, and it is still influenced by the domain size of set $\W$.
\begin{theorem}[Regret bound of fair causal bandit]
Given a causal graph $\mathcal{G}$, let $\delta_E = 4|\mathbf{W}|T\delta$ and $\Delta_{\pi_0}$  denote the maximum fairness discrepancy of a safe policy $\pi_0$ across all rounds.
Setting $\alpha_c = 1$ and $\alpha_r = \frac{2}{\tau - \Delta_{\pi_0}}$, with probability at least $1-\delta_E$, the cumulative regret of F-UCB is bounded by:
\[ \mathcal{R}_T  \leq (\frac{2}{\tau - \Delta_{\pi_0}}+1) \times \left( 2\sqrt{2T|\mathbf{W}|\log(1/\delta_E)} + 4\sqrt{T\log(2/\delta_E)\log(1/\delta_E)} \right) \]
\label{cfbound}
\end{theorem}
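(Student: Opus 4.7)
The plan is to extend the regret analysis of D-UCB (Theorem~\ref{causalucb_1}) to the constrained setting by introducing a feasible mixture comparator and jointly tracking the reward and fairness UCB deviations on a high-probability good event.

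I would begin by defining a good event $\mathcal{E}$ on which, for every round $t$ and every stratum value $\mathbf{w}$, the empirical mean $\hat{\mu}_{\mathbf{w}}(t)$ lies within its sub-Gaussian confidence interval, and the same holds for the counterfactual stratum $\mathbf{w}^{*}$ that appears in $UCB_{\Delta_{\pi}}(t)$ of Eq.~\eqref{ucb_pi}. A union bound over $|\mathbf{W}|$ strata and $T$ rounds, applied twice because the fairness UCB involves both the observational and counterfactual empirical means, yields the factor $4|\mathbf{W}|T$ in $\delta_E$. Note that the fairness UCB radius in Eq.~\eqref{ucb_pi} is a factor $2$ larger than the reward UCB radius in Eq.~\eqref{eq:ucba}; this ultimately accounts for the numerator in $\alpha_r$.

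To absorb the suboptimality caused by the constraint, I would introduce a mixture comparator $\tilde{\pi}_t=(1-\alpha_t)\pi_t^{*}+\alpha_t\pi_0$, where $\pi_t^{*}$ is the unconstrained optimum and $\pi_0$ is the safe policy from Assumption~3. By the linearity of $\mu_{\pi}$ (Eq.~\eqref{eq:reward-expectation}) and $\Delta_{\pi}$ (Eq.~\eqref{estimated cf}) in $P_{\pi}(\mathbf{a}\mid\mathbf{x}_t)$, one has $\Delta_{\tilde{\pi}_t}=(1-\alpha_t)\Delta_{\pi_t^{*}}+\alpha_t\Delta_{\pi_0}$. Picking $\alpha_t$ just large enough so that $UCB_{\Delta_{\tilde{\pi}_t}}(t)\leq\tau$ on $\mathcal{E}$, the boundedness of $\Delta$ and the slack $\tau-\Delta_{\pi_0}>0$ give $\alpha_t/(1-\alpha_t)\leq 2/(\tau-\Delta_{\pi_0})$, so that $\tilde{\pi}_t\in\bar{\Phi}_t$. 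Then by the optimism of F-UCB, $UCB_{a}(\pi_t)\geq UCB_{a}(\tilde{\pi}_t)\geq\mu_{\tilde{\pi}_t}$. The per-round regret thus decomposes as
\[ \mu_{\pi_t^{*}}-\mu_{\pi_t}\;=\;\bigl(\mu_{\pi_t^{*}}-\mu_{\tilde{\pi}_t}\bigr)+\bigl(\mu_{\tilde{\pi}_t}-\mu_{\pi_t}\bigr), \]
with the first piece at most $\alpha_t(\mu_{\pi_t^{*}}-\mu_{\pi_0})$ and the second at most twice the confidence width at $\pi_t$. The choice $\alpha_c=1$, $\alpha_r=2/(\tau-\Delta_{\pi_0})$ is exactly the rescaling that converts the mixture-induced loss into an $\alpha_r$-multiple of the reward UCB width, so that summing over $t$ reduces to a $(\alpha_r+\alpha_c)$-scaled copy of the D-UCB pigeonhole argument: splitting $\sum_{t}\mathrm{width}(\pi_t)$ by $\mathbf{w}_t$ and applying Cauchy--Schwarz on $\sum_{\mathbf{w}}\sqrt{T_{\mathbf{w}}(T)}\leq\sqrt{|\mathbf{W}|T}$ produces the leading $2\sqrt{2T|\mathbf{W}|\log(1/\delta_E)}$ term, and an Azuma--Hoeffding step controlling the martingale difference between the expected width and its stochastic-policy realization along the trajectory produces the additional $4\sqrt{T\log(2/\delta_E)\log(1/\delta_E)}$ term.

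The main obstacle I anticipate is keeping $\tilde{\pi}_t$ simultaneously truly fair, UCB-feasible in $\bar{\Phi}_t$, and close enough to $\pi_t^{*}$; the pinch-point is showing that the slack $\tau-\Delta_{\pi_0}$ uniformly bounds $\alpha_t$ away from $1$ and that this bound propagates cleanly through the two superimposed confidence radii. Once this is established, the remaining summation steps follow the template of Theorem~\ref{causalucb_1} almost verbatim, with the multiplicative price of fairness $\alpha_r+\alpha_c=2/(\tau-\Delta_{\pi_0})+1$ appearing in front of the D-UCB-type bound.
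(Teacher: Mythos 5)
Your proposal follows essentially the same route as the paper's proof: the same good event with the $4|\mathbf{W}|T$ union bound, the same mixture of the unconstrained optimum with the safe policy $\pi_0$ made UCB-feasible, the same conversion of the mixture-induced loss into an $(\alpha_r+1)$-scaled confidence width via the inflated reward bonus, and the same pigeonhole-plus-Azuma summation inherited from the D-UCB analysis. The one imprecision is the displayed bound $\alpha_t/(1-\alpha_t)\leq 2/(\tau-\Delta_{\pi_0})$, which as written is independent of the confidence width and would let the comparator-gap term accumulate linearly in $T$; the correct (and evidently intended, given your following sentence) relation is $\alpha_t/(1-\alpha_t)\leq (1+\alpha_c)\,\mathbb{E}_{\sa\sim\pi^*}[\beta_a(t)]/(\tau-\Delta_{\pi_0})$, so that the per-round mixture loss shrinks with the confidence radius and sums to $O(\sqrt{T})$ as in the paper.
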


\begin{proof}[Proof Sketch]
Our derivation of the regret upper bound of F-UCB follows the proof idea of bandits with linear constraints \cite{pacchiano2021stochastic}, where we treat counterfactual fairness as a linear constraint.
By leveraging the knowledge of a feasible fair policy at each round and properly designing the numerical relation of the scale parameters $\alpha_c$ and $\alpha_r$, we are able to synchronously bound the cumulative regret of reward and fairness discrepancy term. Merging these two parts of regret analysis together leads to a unified bound of the F-UCB algorithm.
By setting $\delta_E$ to $1/T^2$ we can show F-UCB  achieves $\Tilde{O}(\frac{\sqrt{|\mathbf{W}|T}}{\tau - \Delta_{\pi_0}})$ long-term regret. The detailed proof is reported in Appendix D of the supplementary file.
\end{proof}

{\noindent\bf Remark.} In Theorem \ref{cfbound}, $\alpha_c$ and $\alpha_r$ refer to the scale parameters that control the magnitude of the confidence interval for sample mean estimators related to reward and fairness term respectively. Appendix \ref{proof_fucb} shows the numerical relation $\alpha_c$ and $\alpha_r$ should satisfy in order to synchronously bound the uncertainty caused by the error terms. The values taken in Theorem \ref{proof_fucb} is one feasible solution with $\alpha_c$ taking the minimum value under the constraint domain space.

The general framework we proposed (Eq. \eqref{eq:reward-expectation}) can be applied to any policy/function class. However, the D-UCB and F-UCB algorithms we proposed still adopt the deterministic policy following the classic UCB algorithm. Thus, the construction of $\bar{\Phi}_t  = \{\pi :UCB_{\Delta_{\pi}}(t) \leq \tau \}$ can be easily achieved as the total number of policies are finite. In this paper we also assume discrete variables, but in principle the proposed algorithms can also be extended to continuous variables by employing certain approximation approaches, e.g., neural networks for estimating probabilities and  sampling approaches for estimating integrals.  However, the regret bound analysis may not apply as $|\mathbf{W}|$ will become infinite in the continuous space.

\section{Experiment}

In this section, we conduct experiments on two datasets and compare the performance of D-UCB and F-UCB with UCB, C-UCB and Fair-LinUCB in terms of the cumulative regret. We also demonstrate the fairness conformance of F-UCB and the violations of other algorithms.

\subsection{Email Campaign Dataset}

We adopt the Email Campaign data as used in previous works \cite{lu2020regret}. The dataset is constructed based on the online advertising process. Its goal is to determine the best advertisement recommendation strategy for diverse user groups to improve their click through ratio (CTR), thus optimize the revenue generated through advertisements.
Figure \ref{fig:email} shows the topology of the causal graph. We use $X_1$, $X_2$, $X_3$ to denote three user profile attributes, \textit{gender}, \textit{age} and \textit{occupation}; $A_1$, $A_2$, $A_3$ to denote three arm features, \textit{product}, \textit{purpose}, \textit{send-time} that could be intervened; $I_1$, $I_2$, $I_3$, $I_4$  to denote \textit{Email body template}, \textit{fitness}, \textit{subject length}, and \textit{user query}; and $R$  to denote the reward that indicates whether users click the advertisement. The reward function is $R = 1/12(I_1 + I_2 + I_3 + A_3) + \mathcal{N}(0,\sigma^2)$, where $\sigma  = 0.1$.
In our experiment, we set $\delta = 1/t^2$ for each $t \in [T]$. In Appendix E.1, we show the domain values of all 11 attributes and their conditional probability tables.

\begin{figure}[htpb]
\centering
\begin{subfigure}{0.45\textwidth}
\centering
  \includegraphics[width=7cm,height=5.4cm]{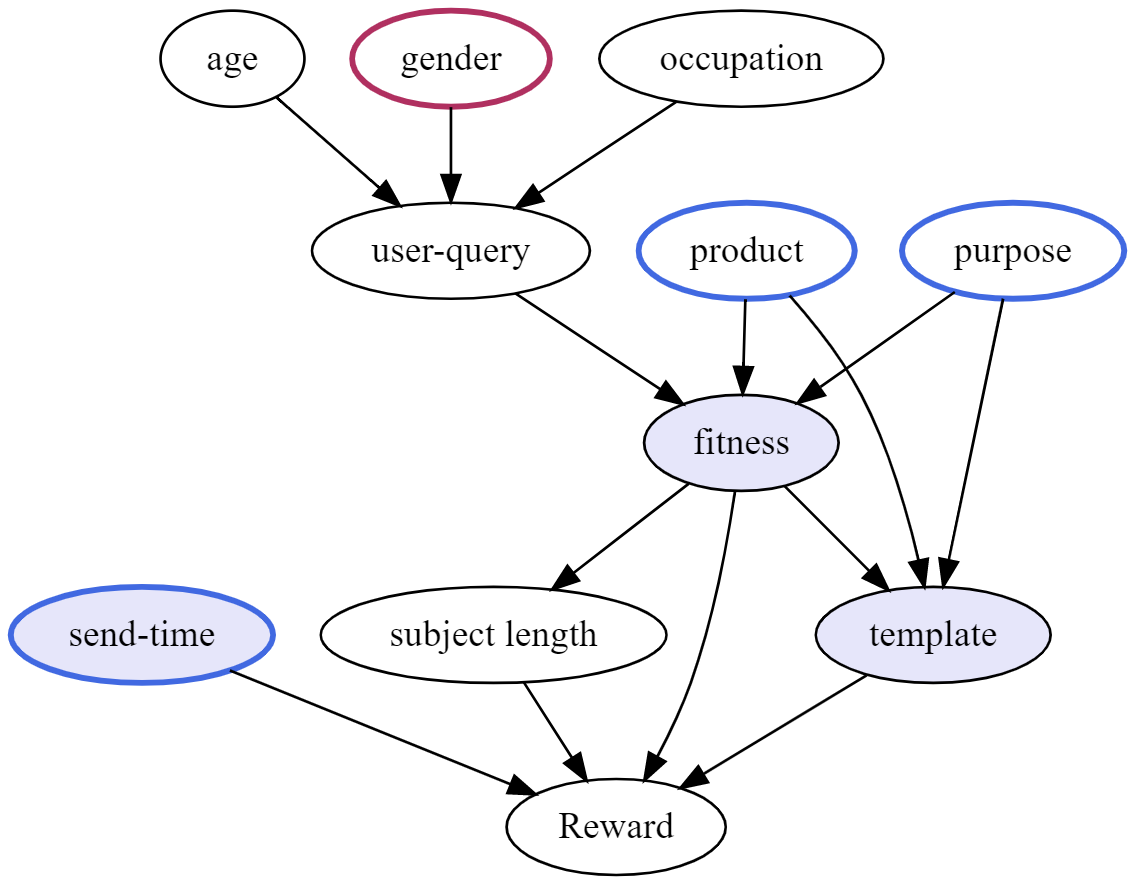}
  \caption{Graph structure under Email Campaign setting.  Nodes with blue frame denote the variables that can be intervened. The node with red frame is the sensitive attribute. Light shaded nodes denote the minimal d-separation set. }
  \label{fig:email}
\end{subfigure}%
\begin{subfigure}{0.6\textwidth}
    \centering
    \includegraphics[width=7cm,height=5.4cm]{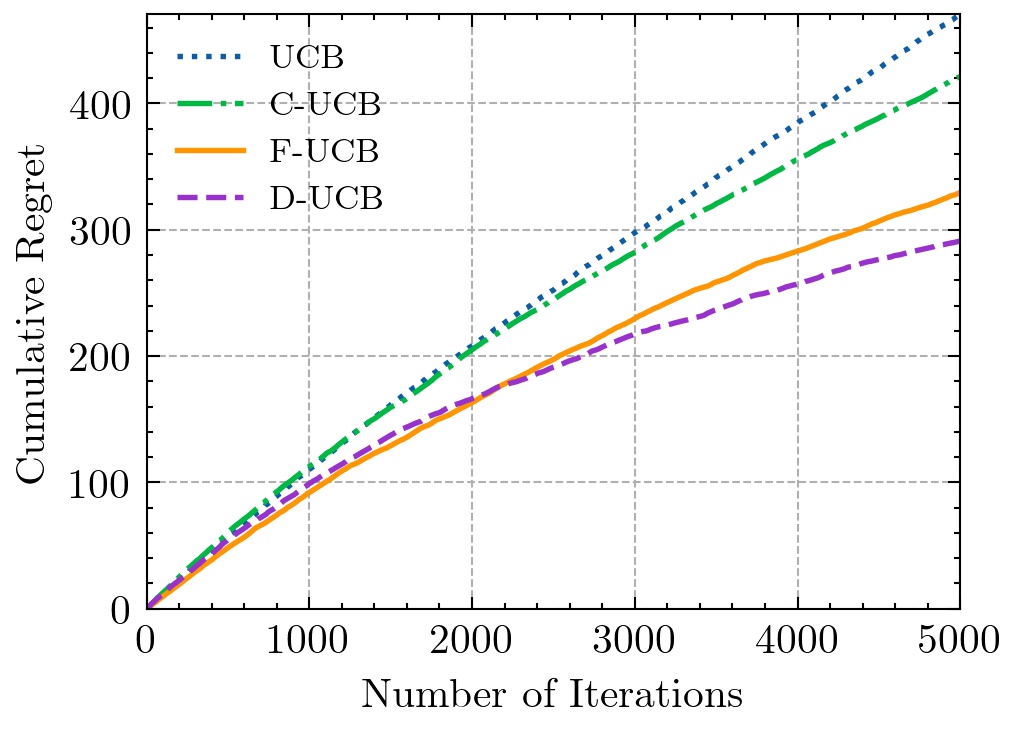}
    \caption{Comparison of bandit algorithms ($\tau = 0.3$ for F-UCB)}
    \label{main result}
\end{subfigure}
\end{figure}

Figure \ref{main result} plots the cumulative regrets of different bandit algorithms along $T$. For each bandit algorithm, the online learning process starts from initialization with no previous observation. Figure \ref{main result} shows clearly all three causal bandit algorithms perform better than UCB. This demonstrates the advantage of applying causal inference in bandits. Moreover, our D-UCB and F-UCB outperform C-UCB, showing the advantage of using d-separation set in our algorithms. The identified  d-separation set  $\W$ (\textit{send time}, \textit{fitness}, and \textit{template}) and the domain space of $\Z$ (\textit{fitness} and \textit{template}) significantly reduce the exploration cost in D-UCB and F-UCB.

{\noindent\bf Remark.} Note that in Figure \ref{main result}, for the first 2000 rounds, F-UCB has lower cumulative regret than D-UCB. A possible explanation is that fair constraint may lead to a policy subspace that contains many policies with high reward. As the number of explorations increase, D-UCB gains more accurate reward estimations for each policy in the whole policy space and eventually outperforms F-UCB.

\begin{table}
\caption{Comparison results for Email Campaign Data}
\centering
\begin{tabular}{|c|c|c|c|c|c|}
\hline
\multirow{2}{*}{$\tau$} & \multirow{2}{*}{\begin{tabular}[c]{@{}c@{}}Cumulative Regret \\ of F-UCB\end{tabular}} & \multicolumn{4}{c|}{Unfair Decisions} \\ \cline{3-6}
                           &                                                                                        & UCB     & C-UCB   & D-UCB   & F-UCB   \\ \hline
0.1                        & 392.12                                                                                 & 3030    & 3176     & 3473         & 0       \\ \hline
0.2                        & 363.55                                                                                 & 1383    & 1487        &  1818       & 0       \\ \hline
0.3                        & 355.21                                                                                 & 482    & 594        & 739         & 0       \\ \hline
0.4                        & 317.80                                                                                 & 141     & 185        &    234     & 0       \\ \hline
0.5                        & 313.89                                                                                 & 18     &  27       &    47     & 0       \\ \hline
\end{tabular}
\label{comparison result}
\end{table}

Table $\ref{comparison result}$ shows how the cumulative regret of F-UCB ($T = 5000$ rounds) varies with the fairness threshold $\tau$. The values in Table \ref{comparison result} (and Table \ref{adult experiment}) are obtained by averaging the results over 5 trials. The larger the $\tau$, the smaller the cumulative regret. In the right block of Table $\ref{comparison result}$, we further report the number of fairness violations of the other three algorithms during the exploration of $T = 5000$ rounds, which demonstrates the need of fairness aware bandits. In comparison, our F-UCB achieves strict counterfactual fairness in every round.

\subsection{Adult-Video Dataset}
We further compare the performance of F-UCB algorithm with Fair-LinUCB \cite{huang2020achieving} on Adult-Video dataset. We follow the settings of \cite{huang2020achieving} by combining two publicly available datasets: Adult dataset and Youtube video dataset. We include in Appendix E.2 detailed information about datasets and experiment. We select 10,000 instances and use half of the data as the offline dataset to construct causal graph and adopt the other half to be user sequence and arm candidates for online recommendation.
The causal graph constructed from the training data is shown in Figure \ref{causal graph}, where $\mathbf{X} = \{ age, sex, race, income\}$ denote user features, $\mathbf{A} = \{ length,ratings,views,comments\}$ denote video features. Bold nodes denote direct parents of the reward and red nodes denote the sensitive attribute. The minimum d-separation set for this graph topology is $\W = \{\textit{age, income, ratings, views}\}$.
The reward function is set as $R = 1/5(age + income + ratings + views) + \mathcal{N}(0, \sigma^2)$, where $\sigma = 0.1$. Following previous section we set $\delta = 1/t^2$ for each $t \in [T]$. The cumulative regret is added up through 5000 rounds.

\begin{figure}
\begin{minipage}[b]{.5\linewidth}
\centering
\includegraphics[width=6.0cm,height=4cm]{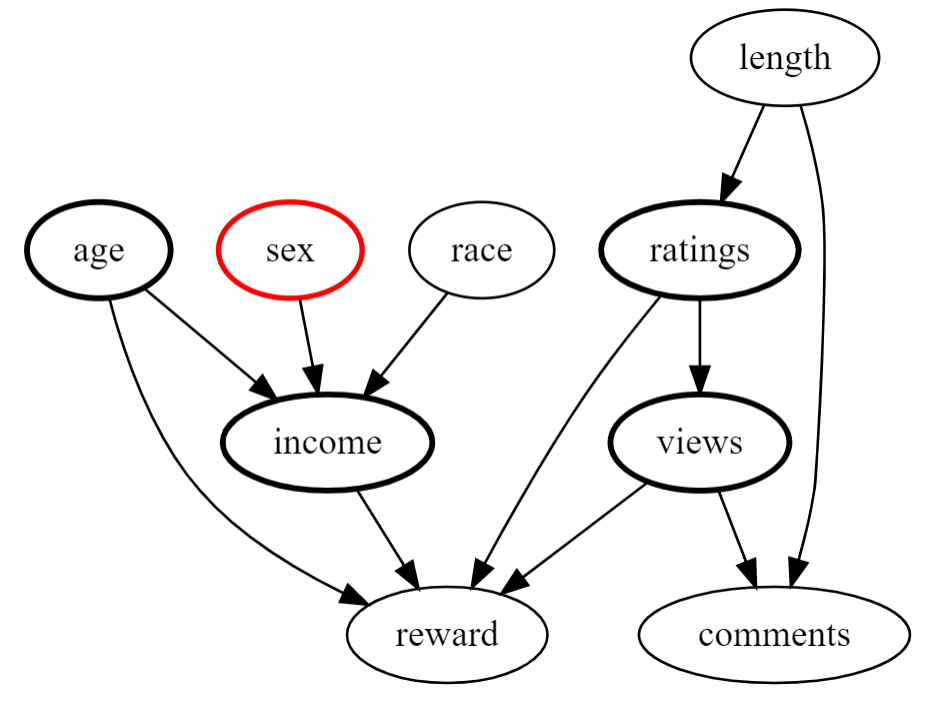}
\caption{Causal graph for adult-video data}
\label{causal graph}
\end{minipage}
\begin{minipage}[b]{.5\linewidth}
\centering
\begin{tabular}{|c|c|c|l|c|l|}
\hline
\multirow{2}{*}{$\tau$} & \begin{tabular}[c]{@{}c@{}} Regret\end{tabular} & \multicolumn{4}{c|}{Unfair Decisions}                         \\ \cline{2-6}
                           & F-UCB                                                                 & \multicolumn{2}{c|}{F-UCB} & \multicolumn{2}{c|}{Fair-LinUCB} \\ \hline
0.1                        & 361.43                                                                & \multicolumn{2}{c|}{0}     & \multicolumn{2}{c|}{2053}            \\ \hline
0.2                        & 332.10                                                                & \multicolumn{2}{c|}{0}     & \multicolumn{2}{c|}{1221}            \\ \hline
0.3                        & 323.12                                                                & \multicolumn{2}{c|}{0}     & \multicolumn{2}{c|}{602}            \\ \hline
0.4                        & 303.32                                                                & \multicolumn{2}{c|}{0}     & \multicolumn{2}{c|}{82}            \\ \hline
0.5                        & 296.19                                                                & \multicolumn{2}{c|}{0}     & \multicolumn{2}{c|}{6}            \\ \hline

\end{tabular}
\tabcaption{Comparison results for Adult-Video Data}
\label{adult experiment}
\end{minipage}
\end{figure}

We  observe from Table \ref{adult experiment} a high volume of unfair decisions made by Fair-LinUCB under strict fairness threshold (nearly forty percent of the users are unfairly treated when $\tau = 0.1$). This implies Fair-LinUCB algorithm can not achieve individual level fairness when conducting online recommendation compared to F-UCB. On the other hand, the cumulative regret for Fair-LinUCB is around 250 over 5000 rounds, which is slightly better than F-UCB. This is because we use the same linear reward setting as  \cite{huang2020achieving} in our experiment and Lin-UCB based algorithm will better catch the reward distribution under this setting.

\section{Related Work}

{\noindent\bf Causal Bandits.}
There have been a few research works of studying how to learn optimal interventions sequentially by representing the relationship between interventions and outcomes as a causal graph along with associated conditional distributions.  \cite{DBLP:conf/nips/LattimoreLR16} introduced the causal bandit problems in which interventions are treated as arms in a bandit problem but their influence on the reward, along with any other observations, is assumed to conform to a known causal graph. Specifically they focus on the setting that observations are only revealed after selecting an intervention (and hence the observed features cannot be used as context) and the distribution of the parents of the reward is known under those interventions.
\cite{DBLP:conf/nips/LeeB18} developed a way to choose an intervention subset based on the causal graph structure as  a brute-force way to apply standard bandit algorithms on all interventions can suffer huge regret.
\cite{DBLP:conf/aaai/LeeB19} studied a relaxed version of the structural causal bandit problem when not all variables are manipulable. \cite{DBLP:conf/icml/SenSDS17} considered best intervention identification via importance sampling. Instead of forcing a node to take a specific value, they adopted soft intervention that  changes the conditional distribution of a node given its parent nodes. \cite{lu2020regret} proposed two algorithms, causal upper confidence bound (C-UCB) and causal Thompson Sampling (C-TS), and showed that they have improved cumulative regret bounds compared with algorithms that do not use causal information. They focus on causal relations among interventions and use causal graphs to capture the dependence among reward distribution of these interventions.

{\noindent\bf Fair Machine Learning.}
Fairness in machine learning has been a research subject with rapid growth and attention recently. Many different definitions of fairness have been designed to fit different settings, e.g., equality of opportunity and equalized odds \cite{hardt2016equality,zafar2017fairness}, direct and indirect discrimination \cite{zhang2017causal,zhang2018fairness,chiappa2018path}, counterfactual fairness \cite{kusner2017counterfactual,russell2017worlds,wu2019counterfactual}, and path-specific counterfactual fairness \cite{wu2019pcfairness}.
Related but different from our work include long term fairness (e.g., \cite{DBLP:conf/icml/LiuDRSH18}), which concerns for how decisions affect the long-term well-being of disadvantaged groups measured in terms of a temporal variable of interest, fair pipeline or multi-stage learning
(e.g., \cite{DBLP:journals/corr/BowerKNSVV17,DBLP:conf/ijcai/EmelianovAGGL19,DBLP:conf/innovations/DworkI19,DBLP:conf/forc/DworkIJ20}), which  primarily consider the combination of multiple non-adaptive sequential decisions and evaluate fairness at the end of the pipeline, and fair sequential learning (e.g., \cite{DBLP:conf/nips/JosephKMR16}), which  sequentially considers each individual and makes decision for them.
In \cite{DBLP:conf/icml/LiuDRSH18}, the authors proposed the study of delayed impact of fair machine learning and introduced a one-step feedback model of decision-making to  quantify the long-term impact of classification on different groups in the population.
\cite{DBLP:journals/corr/abs-2011-06738} developed a metric-free individual fairness and a cooperative contextual bandits (CCB) algorithm. The CCB algorithm utilizes fairness as a reward and attempts to maximize it. It tries to achieve individual fairness unlimited to problem-specific similarity metrics using multiple gradient contextual bandits.

\section{Conclusions}
In our paper, we studied how to learn optimal interventions sequentially by incorporating causal inference in bandits. We developed D-UCB and F-UCB algorithms which
leverage the d-separation set identified from the underlying causal graph and adopt soft intervention to model the arm selection strategy. Our F-UCB further achieves counterfactual individual fairness in each round of exploration by choosing arms from a subset of arms satisfying counterfactual fairness constraint. Our theoretical analysis and empirical evaluation show the effectiveness of our algorithms against baselines.

\begin{ack}
This work was supported in part by NSF 1910284, 1920920, 1940093, and 1946391.
\end{ack}

\newpage
\appendix
\section{Nomenclature and Assumptions}

In our regret bound analysis of D-UCB and F-UCB algorithms, we follow several standard assumptions \cite{DBLP:journals/corr/abs-2006-10185}  to guarantee the correctness and the simplicity of the proofs.

\begin{assumption}
For all $t \in [T]$,  both the error term of reward and the error term of counterfactual fairness discrepancy follow 1-sub-Gaussian distribution.
\end{assumption}

\begin{assumption}
For all $t \in [T]$, both the mean of  reward and the mean of counterfactual fairness discrepancy are within $[0,1]$. \end{assumption}

\begin{assumption}
There exists a safe policy $\pi_0$, i.e., $\pi_0 \in \Pi_t$ such that $\Delta_{\pi_0} \leq \tau$ for each $t \in [T]$.
\end{assumption}

The last assumption introduces the existence of a safe policy at each round, which plays an important role in the regret bound analysis of F-UCB. The nomenclature used for the proof part is shown in Table \ref{nomenclature}.

\begin{table*}
\caption{Nomenclature}
\begin{tabularx}{\textwidth}{p{0.22\textwidth}X}
\toprule
  $\mathcal{A}_t$ & Arm set at time $t$ \\
  $\A,\X,R$  & Arm features, user features, and reward \\
  $\W$  & d-separation set that separates $R$ from $(\A \cup \X ) \backslash \W$ \\
  $\Z$ & The difference between the d-separation set $\W$ and $\A\cup\X$. \\
  $\delta$ & With probability at least $1-\delta$ that the true reward is less than the estimated upper confidence bound for an arm $a$ at time $t$ \\
  $\delta_E$ & With probability at least $1-\delta_E$ that the regret of causal fair bandit is bounded \\
  $\delta'$ & With probability at least $1-\delta'$ that the true counterfactual discrepancy is less than its estimated upper confidence bound for an arm $a$ at time $t$ \\

  $UCB_a(t)$ & Upper confidence bound of the reward for action $a$ based on the observed values up to time $t$ \\
  $\mu_a$ & Expected mean reward for arm $a$\\
  $\mu_{a,s^*}$ & Estimated mean reward for arm $a$ if $gender = s^*$ given the user's profile\\
  $\mu_{\pi}$ & Expected mean reward for taking policy $\pi$\\
  $\hat{\mu}_{\pi}(t)$ & Estimated mean reward of a policy $\pi$ based on the observed values up to time $t$\\
  $\mathcal{R}_T$ & Cumulative regret up to time $T$ \\
  $\alpha_r$ & Parameter that controls the scale of the confidence interval of reward \\
  $\alpha_c$ & Parameter that controls the scale of the confidence interval of counterfactual discrepancy \\
  $\gamma_t$ & Parameter that could be tuned to ensure the fairness of a certain policy\\
  $E$ & Event under which all the true rewards are less than the estimated upper confidence bound \\
  $E_{cf}$ & Event under which all the counterfactual discrepancies are less than the estimated upper confidence bound \\
\bottomrule
\end{tabularx}
\label{nomenclature}
\end{table*}

\section{Regret Bound of D-UCB}
\label{proof_cucb}

\begin{theorem*}[Regret bound of D-UCB]
Given a causal graph $\mathcal{G}$, with probability at least $1-2\delta T |\mathbf{W}| - \exp(- \frac{|\mathbf{W}| \log^3(T)}{32\log(1/\delta)})$,  the regret of the causal bandit based on d-separation (D-UCB) algorithm is bounded by
\[  \mathcal{R}_T \leq \sqrt{|\mathbf{W}| T\log(T)}\log(T) + \sqrt{32|\mathbf{W}|T \log(1/\delta)}   \]
 where $|\mathbf{Z}|$  denotes the domain size of subset $\Z$, i.e., the difference between the d-separation set $\W$ and $\A\cup\X$ in $\mathcal{G}$.
\label{causalucb}
\end{theorem*}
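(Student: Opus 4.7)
The plan is to follow the classical UCB regret analysis of \cite{auer2002finite}, replacing the usual arm-by-arm decomposition with the d-separation identity \eqref{eq:mua} so that the effective exploration space shrinks from all arm/user contexts to the set $\mathbf{W}$. First, I would define the good event
\[ E = \bigl\{\,|\hat\mu_{\mathbf{w}}(t)-\mu_{\mathbf{w}}|\leq \sqrt{2\log(1/\delta)/(1\vee N_{\mathbf{w}}(t))}\text{ for all }\mathbf{w}\in\mathbf{W},\ t\leq T\,\bigr\}. \]
Because the reward noise is $1$-sub-Gaussian (Assumption~1), a two-sided sub-Gaussian tail together with a union bound over the at most $T|\mathbf{W}|$ pairs $(t,\mathbf{w})$ gives $\Pr(E^c)\leq 2\delta T|\mathbf{W}|$, which contributes the first probability-complement term in the statement.

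On $E$, the standard optimism argument combined with the greedy rule \eqref{eq:pi_t} yields the per-round regret
\[ \mu_{\pi_t^*}-\mu_{\pi_t}\leq UCB_{\pi_t}(t)-\mu_{\pi_t} = \mathbb{E}_{\sa\sim\pi_t}\Bigl[\sum_{\mathbf{z}}(UCB_{\mathbf{w}}(t)-\mu_{\mathbf{w}})P(\mathbf{z}\mid\mathbf{x}_{t,a})\Bigr] \leq 2\,\mathbb{E}_{\sa\sim\pi_t}\Bigl[\sum_{\mathbf{z}}\sqrt{\tfrac{2\log(1/\delta)}{1\vee N_{\mathbf{w}}(t)}}\,P(\mathbf{z}\mid\mathbf{x}_{t,a})\Bigr], \]
where \eqref{eq:mua} and \eqref{eq:ucba} supply the decomposition and the width. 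Writing $X_t=\sqrt{2\log(1/\delta)/(1\vee N_{\mathbf{w}_t}(t))}$ for the value at the actually observed $\mathbf{w}_t$, the inner expectation equals the conditional expectation of $X_t$ given the history through round $t-1$. The standard pigeonhole counting gives $\sum_{t=1}^T X_t\leq \sum_{\mathbf{w}}2\sqrt{2\log(1/\delta)N_{\mathbf{w}}(T)}$, and Cauchy-Schwarz with the identity $\sum_\mathbf{w} N_\mathbf{w}(T)=T$ yields $\sum_t X_t\leq \sqrt{8|\mathbf{W}|T\log(1/\delta)}$. After multiplying by the $2$ from optimism this reproduces the second summand $\sqrt{32|\mathbf{W}|T\log(1/\delta)}$.

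The main obstacle is closing the gap between the conditional sum $\sum_t \mathbb{E}[X_t\mid\text{history}]$ and the realized $\sum_t X_t$. Because each $X_t$ is almost surely bounded by $\sqrt{2\log(1/\delta)}$, the increments $M_t=\mathbb{E}[X_t\mid\text{history}]-X_t$ form a bounded martingale difference sequence, and an Azuma-Hoeffding tail bound succeeds at deviation $\epsilon$ with probability at least $1-\exp(-\epsilon^2/(8T\log(1/\delta)))$. Setting $\epsilon$ of order $\sqrt{|\mathbf{W}|T\log(T)}\log(T)$ is exactly the choice that makes this failure probability equal the second correction $\exp(-|\mathbf{W}|\log^3(T)/(32\log(1/\delta)))$ in the theorem while producing the additive summand $\sqrt{|\mathbf{W}|T\log(T)}\log(T)$. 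Intersecting the two success events and summing both contributions yields the claimed bound; the crucial point where the d-separation buys improvement is the identity $\sum_\mathbf{w} N_\mathbf{w}(T)=T$, which forces the leading $\sqrt{\cdot}$ factor to scale with $|\mathbf{W}|$ rather than with the much larger arm/context domain.
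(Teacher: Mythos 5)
Your proposal is correct and follows essentially the same route as the paper's proof: the same good event $E$ with a union bound giving $2\delta T|\mathbf{W}|$, the same optimism step killing the $\mu_{a^*}-UCB_{a_t}(t)$ term, the same split of the confidence-width sum into a realized part (bounded by pigeonhole plus Cauchy--Schwarz to get $\sqrt{32|\mathbf{W}|T\log(1/\delta)}$) and a martingale part handled by Azuma--Hoeffding with $\epsilon$ of order $\sqrt{|\mathbf{W}|T\log(T)}\log(T)$. The only differences are immaterial constant-tracking choices in the Azuma step (you bound the width before, rather than after, the factor of $2$ from optimism), which do not change the argument.
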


\begin{proof}
Following the definition we can further define the expected reward mean of a certain policy as
\begin{align*}
  \mu_{\pi} = \mathbb{E}_{a \sim \pi}[\mu_a|do(a)] &= \mathbb{E}_{a \sim \pi} \left[ \sum_{i=1}^{|\mathbf{Z}|} \mathbb{E}[R|\mathbf{W} = \mathbf{w}_i ] P(\mathbf{Z} = \mathbf{z}_i| a) \right]
\end{align*}
and the policy applied at each time $t$ as
 $\pi_t  = \argmax_{\pi \in \Pi_t} \mathbb{E}_{a \sim \pi}[\mu_a]$.

Let $N_{\mathbf{w}}(t) = \sum_{s=1}^{t} \mathbbm{1}_{\mathbf{W}_s = \mathbf{w}}$  denote the count for a certain domain value of $\W$ up to time $t$.  Further we define the mean of the reward related to a d-separation set domain value as $\mu_{\w} =   \mathbb{E}[R|\mathbf{W} = \mathbf{w}]$ and its estimated value as $ \hat{\mu}_{\w}(t) =  \frac{1}{N_{\w}(t)} \sum^{t}_{s =1}  R_{a_s}   \mathbbm{1}_{\mathbf{W}_s = \mathbf{w}}$.

We also define the upper confidence bound of the reward for each arm and the upper confidence bound for each policy:

\[ UCB_{\mathbf{w}}(t) = \hat{\mu}_{\mathbf{w}}(t) + \sqrt{\frac{2\log(1/\delta)}{1 \vee N_{\mathbf{w}}(t)}}  ~~~~~~~~~~~~ UCB_{a}(t) = \sum_{\Z} UCB_{\mathbf{w}}(t) P(\mathbf{z}|\x_{t,a})
\]

\[ \mathbb{E}_{a \sim \pi} [UCB_{a}(t)] = \mathbb{E}_{a \sim \pi} \left[ \sum_{\Z} UCB_{\mathbf{w}}(t) P(\mathbf{z}|\x_{t,a}) \right]  \]

Let $E$ be the event that for all time $t \in [T]$ and value index $i  \in [|\mathbf{W}|]$, we have
\[ |\hat{\mu}_{\w_i}(t) - \mu_{\w_i}| \leq \sqrt{\frac{2\log(1/\delta)}{1 \vee N_{\w_i}(t)}} \]
 Since $\hat{\mu}_{\w_i}(t)$  is the sample mean estimator of $\mu_{\w_i}$, and the error term follows sub-Gaussian distribution, we can show
\begin{align*}
 & P\left( |\hat{\mu}_{\w_i}(t) - \mu_{\w_i}| \geq \sqrt{\frac{2\log(1/\delta)}{1 \vee N_{\w_i}(t)}} \right)  = \mathbb{E}\left[ P \left( |\hat{\mu}_{\w_i}(t) - \mu_{\w_i}| \geq \sqrt{\frac{2\log(1/\delta)}{1 \vee N_{\w_i}(t)}} \middle| \w_{(1)},...,\w_{(t)} \right)\right] \\
&~~~~~~~~~~~~~~~~~~~~~~~~~~~~~~~~~~~~~~~~~~~~~~~~~~~~~~~~~~~~~\leq \mathbb{E}[2\delta] = 2\delta
\end{align*}
where $\w_{(t)}$ denotes the observed values at time $t$. Thus by summing up the probabilities through all domain values of $t \in [T]$ and $i \in [|\mathbf{W}|]$, using union bound criteria we have $P(E^c) = 1 - P(E) \leq 2 \delta T |\mathbf{W}|$. The above result implies a lower probability bound for event $E$. The cumulative regret could be decomposed as
\begin{align*}
\mathcal{R}_T = \sum_{t=1}^{T}(\mu_{a^*} - \mu_{a_t})
    = \sum_{t=1}^{T}\left(\mu_{a^*} - UCB_{a_t}(t) + UCB_{a_t}(t) - \mu_{a_t}\right)
\end{align*}
Following the rule of optimism in the face of uncertainty, under event $E$ we have
\begin{align*}
\mu_{a^*}  &= \sum_{i=1}^{|\mathbf{Z}|} \mathbb{E}[R|\mathbf{W} = \mathbf{w}_i] P(\mathbf{Z} = \mathbf{z}_i|a^*) \\
& \leq   \sum_{i=1}^{|\mathbf{Z}|} UCB_{\w_i}(t) P(\mathbf{Z} = \mathbf{z}_i|a^*) = UCB_{a^*}(t)
\end{align*}
As $UCB_{a^*}(t) \leq UCB_{a_t}(t)$ always holds due to OFU arm picking strategy, we have $\mu_{a^*} - UCB_{a_t}(t) \leq 0$.

With probability at least $1 - 2 \delta T |\mathbf{W}|$, the cumulative regret can thus be further bounded by
\begin{align}
\mathcal{R}_T &\leq \sum_{t=1}^{T}( UCB_{a_t}(t)  - \mu_{a_t}) \nonumber \\
    &= \sum_{t=1}^{T}\sum_{i=1}^{|\mathbf{Z}|} (UCB_{\w_i}(t) -\mu_{\w_i})  P(\mathbf{Z} = \mathbf{z}_i|a_t) \nonumber  \\
    &\leq \sum_{t=1}^{T}\sum_{i=1}^{|\mathbf{W}|} \sqrt{\frac{8\log(1/\delta)}{1 \vee N_{\w_i}(t)}}  P(\mathbf{Z} = \mathbf{z}_i|a_t) \nonumber \\
&= \sum_{t=1}^{T}\sum_{i=1}^{|\mathbf{W}|} \sqrt{\frac{8\log(1/\delta)}{1 \vee N_{\w_i}(t)}} \left( P(\mathbf{Z} = \mathbf{z}_i|a_t) - \mathbbm{1}_{Z_{(t)} = Z_i} \right) +\sum_{t=1}^{T}\sum_{i=1}^{|\mathbf{W}|} \sqrt{\frac{8\log(1/\delta)}{1 \vee N_{\w_i}(t)}} \left(\mathbbm{1}_{Z_{(t)} = Z_i} \right)
\label{regret_latter}
\end{align}

The second part of Equation \ref{regret_latter} is bounded by
\begin{align*}
 \sum_{t=1}^{T}\sum_{i=1}^{|\mathbf{W}|}\sqrt{\frac{8\log(1/\delta)}{1 \vee N_{\w_i}(t)}} \mathbbm{1}_{Z_{(t)} = Z_i}  &\leq \sum_{i=1}^{|\mathbf{W}|} \int_{0}^{N_{\w_i}(T)} \sqrt{\frac{8\log(1/\delta)}{s}}ds \\
&\leq \sum_{i=1}^{|\mathbf{W}|} \sqrt{32N_{\w_i}(T)\log(1/\delta)}  \\
&\leq \sqrt{32|\mathbf{W}| T \log(1/\delta)}
\end{align*}

We will use the following proposition called Azuma's inequality to derive the bound of the first term of Equation \ref{regret_latter}.
\begin{proposition}
Suppose $\{ M_k: k = 0,1,2...   \} $  is a martingale and $|M_k - M_{k-1}| < c_k$ almost surely, then for all $t \in [T]$ and positive value $\epsilon$ we have:
\[ P(|M_t-M_0| > \epsilon ) \leq  \exp \left( \frac{-\epsilon^2}{2\sum_{k=1}^{t} c_k^2} \right)  \]
\end{proposition}

For the first part, we further define \[M_t = \sum_{s=1}^{t}\sum_{i=1}^{|\mathbf{W}|}\sqrt{\frac{8\log(1/\delta)}{1 \vee N_{\w_i}(t)}}\left( P(\mathbf{Z} = \mathbf{z}_i|a_t) - \mathbbm{1}_{Z_{(t)} = Z_i}\right)\]
with $M_0 = 0$, since $\{ M_t\}_{t=0}^T$ is a martingale sequence, we have
\begin{align*}
 |M_t - M_{t-1}|^2  &=  \left| \sum_{i=1}^{|\mathbf{W}|}  \sqrt{\frac{8\log(1/\delta)}{1 \vee N_{\w_i}(t)}} \left( P(\mathbf{Z} = \mathbf{z}_i|a_t)  - \mathbbm{1}_{Z_{(t)} = Z_i} \right) \right|^2  \leq 32\log(1/\delta)
\end{align*}
which shows $|M_t - M_{t-1}|$ is bounded for any $t \in [T]$. Applying Azuma's inequality, we have
\begin{align*}
P\left(|M_T-M_0|> \sqrt{|\mathbf{W}| Tlog(T)}\log(T)\right)  &= P\left(|M_T|> \sqrt{|\mathbf{W}| T\log(T)}\log(T)\right)\\
&\leq \exp(- \frac{|\mathbf{W}| \log^3(T)}{32\log(1/\delta)})
\end{align*}

The formula above gives a high probability bound of the first part. Now we can combine the bounds of two parts in Equation \ref{regret_latter} to derive the high probability bound of $\mathcal{R}_T$. Since $P(E^c) \leq 2\delta T |\mathbf{W}|$, applying union bound rule, with probability at least $1-2\delta T |\mathbf{W}|- \exp(- \frac{|\mathbf{W}| \log^3(T)}{32\log(1/\delta)})$, the regret is bounded by:
\begin{equation}
    \mathcal{R}_T \leq \sqrt{|\mathbf{W}| T\log(T)}\log(T) + \sqrt{32|\mathbf{W}| T \log(1/\delta)}
\label{high_prob_bound}
\end{equation}
\end{proof}

\begin{corollary}
By setting $\delta = 1/T^2$, the causal bandit algorithm achieves $\Tilde{O}(\sqrt{|\mathbf{W}| \cdot T})$ regret bound.
\end{corollary}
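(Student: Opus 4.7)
The plan is to derive the corollary as a direct consequence of Theorem~\ref{causalucb_1} by substituting the chosen value of $\delta$ into both the regret expression and the probability expression, and then verifying that the resulting bound is $\tilde{O}(\sqrt{|\mathbf{W}| T})$ while the failure probability vanishes as $T \to \infty$.

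First, I would substitute $\delta = 1/T^2$ into the high-probability regret bound of Theorem~\ref{causalucb_1}. Under this choice we have $\log(1/\delta) = 2\log(T)$, so the bound
\[ \mathcal{R}_T \leq \sqrt{|\mathbf{W}| T\log(T)}\log(T) + \sqrt{32|\mathbf{W}| T \log(1/\delta)} \]
becomes
\[ \mathcal{R}_T \leq \sqrt{|\mathbf{W}| T}\,\log^{3/2}(T) + \sqrt{64\,|\mathbf{W}| T \log(T)}. \]
The dominant term is the first, which is $O\bigl(\sqrt{|\mathbf{W}| T}\,\log^{3/2}(T)\bigr)$; absorbing the polylogarithmic factors into the $\tilde{O}$ notation yields $\mathcal{R}_T = \tilde{O}(\sqrt{|\mathbf{W}| T})$ as claimed.

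Second, I would verify that the high-probability event under which Theorem~\ref{causalucb_1} holds occurs with probability tending to $1$. With $\delta = 1/T^2$, the failure probability is at most
\[ 2\delta T |\mathbf{W}| + \exp\!\left(-\frac{|\mathbf{W}| \log^3(T)}{32\log(1/\delta)}\right) = \frac{2|\mathbf{W}|}{T} + \exp\!\left(-\frac{|\mathbf{W}| \log^2(T)}{64}\right), \]
both of which go to $0$ as $T \to \infty$. Hence the regret bound holds with high probability.

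There is no real obstacle here: the corollary is essentially a bookkeeping step that balances the two competing terms in Theorem~\ref{causalucb_1}. The only subtlety worth flagging is ensuring that the choice $\delta = 1/T^2$ simultaneously keeps the probability-of-failure terms $o(1)$ while not inflating the $\sqrt{\log(1/\delta)}$ factor inside the regret beyond $\log(T)$; the choice $\delta = 1/T^2$ is natural because larger $\delta$ fails to drive $2\delta T |\mathbf{W}|$ to zero, while much smaller $\delta$ would inflate the second term in the regret without improving the rate.
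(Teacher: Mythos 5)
Your proposal is correct and follows essentially the same route as the paper: direct substitution of $\delta = 1/T^2$ into both the regret expression and the failure probability of Theorem~\ref{causalucb_1}, followed by absorbing the polylogarithmic factors into the $\tilde{O}$ notation. Your version is in fact slightly more explicit than the paper's one-line computation, and your closing remark about why $\delta = 1/T^2$ balances the failure probability against the $\sqrt{\log(1/\delta)}$ inflation is a sound observation that the paper leaves implicit.
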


\begin{proof}
Plugging in the value $\delta = 1/T^2$, with probability at least $ 1-2|\mathbf{W}|/T-\exp(-\frac{|\mathbf{W}| \log^2(T)}{64}) $, the regret is bounded by
\[ R_T \leq 16\sqrt{|\mathbf{W}| T\log(T)}\log(T) \]

The above formula thus leads to $\Tilde{O}(\sqrt{|\mathbf{W}| \cdot T})$ long-term expected regret.

\end{proof}

\section{Upper Bound of Unidentifiable Counterfactual Fairness}
\begin{theorem*}
Given a causal graph as shown in Figure 1 in the main paper, if there exists a non-empty set $\mathbf{B} \subseteq \X \backslash \{S\} $ which are descendants of $S$, then
$\mu_{a,s^*}=\mathbb{E}[R(a,s^*)|\mathbf{x}_{t}]$ is bounded by
\begin{equation}
\sum_{\Z}\min_{\mathbf{b}}\{ \mathbb{E}[R|s^{*},\mathbf{w}\backslash s_t] \} \cdot P(\z|\mathbf{x}_{t,a}) \leq \mu_{a,s^*} \leq \sum_{\Z}\max_{\mathbf{b}}\{ \mathbb{E}[R|s^{*},\mathbf{w}\backslash s_t] \} \cdot P(\z|\mathbf{x}_{t,a})
\end{equation}
\end{theorem*}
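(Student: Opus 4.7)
The plan is to reduce the unidentifiable counterfactual $\mu_{a,s^*}=\mathbb{E}[R(a,s^*)\mid \mathbf{x}_t]$ to an identifiable one by marginalizing over the unknown counterfactual value of $\mathbf{B}$, and then to apply the bounding technique of Proposition 2 in \cite{wu2019counterfactual}. The source of unidentifiability is that $\mathbf{B} \subseteq \X\backslash\{S\}$ are descendants of $S$, so under $do(s^*)$ the variables $\mathbf{B}$ would take values $\mathbf{B}(s^*)$ that in general differ from the factual $\mathbf{b}_t$, and the distribution of $\mathbf{B}(s^*)$ given the factual context $\mathbf{x}_t$ is not recoverable from the observational distribution without knowing $f_{\mathbf{B}}$.

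The first step is to introduce $\mathbf{b}^*$ as the counterfactual value of $\mathbf{B}(s^*)$ and decompose
\[
\mu_{a,s^*} \;=\; \sum_{\mathbf{b}^*} \mathbb{E}\bigl[R(a,s^*)\,\big|\, \mathbf{x}_t, \mathbf{B}(s^*)=\mathbf{b}^*\bigr]\; P\bigl(\mathbf{B}(s^*)=\mathbf{b}^*\,\big|\,\mathbf{x}_t\bigr).
\]
Once $\mathbf{b}^*$ is fixed, the inner counterfactual expectation is identifiable: the counterfactual world with $do(a),do(s^*),\mathbf{B}(s^*)=\mathbf{b}^*$ behaves like a standard intervention on $\A,S,\mathbf{B}$, and the same d-separation derivation that gave Eq.~\eqref{cf} applies verbatim, with $\mathbf{b}_t$ replaced by $\mathbf{b}^*$ inside $\mathbf{w}\backslash s_t$. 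Using the graph structure of Figure~\ref{fig:cbandit}, where $\Z \subseteq \mathbf{I}$ is separated from $S$ by the feature vector $\mathbf{x}_{t,a}$, this yields
\[
\mathbb{E}\bigl[R(a,s^*)\,\big|\, \mathbf{x}_t, \mathbf{B}(s^*)=\mathbf{b}^*\bigr] \;=\; \sum_{\Z} \mathbb{E}[R\mid s^*, \mathbf{w}\backslash s_t]\cdot P(\z\mid \mathbf{x}_{t,a}),
\]
where the dependence on $\mathbf{b}^*$ is carried entirely through the expectation term $\mathbb{E}[R\mid s^*, \mathbf{w}\backslash s_t]$.

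The second step is the bounding argument of Proposition 2 in \cite{wu2019counterfactual}. Since $P(\mathbf{B}(s^*)=\mathbf{b}^*\mid \mathbf{x}_t)$ is an unknown but legitimate probability mass function over $\mathbf{b}^*$, the convex combination in the decomposition is sandwiched between its minimum and maximum values:
\[
\min_{\mathbf{b}^*}\Bigl\{\sum_{\Z} \mathbb{E}[R\mid s^*,\mathbf{w}\backslash s_t]\cdot P(\z\mid \mathbf{x}_{t,a})\Bigr\} \;\leq\; \mu_{a,s^*} \;\leq\; \max_{\mathbf{b}^*}\Bigl\{\cdot\Bigr\}.
\]
Finally, since $P(\z\mid \mathbf{x}_{t,a})\geq 0$ and does not depend on $\mathbf{b}^*$, the $\min$/$\max$ can be pushed inside the sum over $\Z$ term-by-term, which yields exactly the claimed
\[
\sum_{\Z}\min_{\mathbf{b}}\{\mathbb{E}[R\mid s^*,\mathbf{w}\backslash s_t]\}\cdot P(\z\mid \mathbf{x}_{t,a}) \;\leq\; \mu_{a,s^*} \;\leq\; \sum_{\Z}\max_{\mathbf{b}}\{\mathbb{E}[R\mid s^*,\mathbf{w}\backslash s_t]\}\cdot P(\z\mid \mathbf{x}_{t,a}).
\]

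The main obstacle I expect is the identifiability step in the first part: carefully justifying that once $\mathbf{B}(s^*)=\mathbf{b}^*$ is conditioned on, the counterfactual expectation collapses to the observational functional $\sum_{\Z} \mathbb{E}[R\mid s^*,\mathbf{w}\backslash s_t]\,P(\z\mid \mathbf{x}_{t,a})$ with $\mathbf{b}^*$ substituted into $\mathbf{w}\backslash s_t$. This hinges on the Markovianity assumption and on $\W$ being a d-separator of $R$ from $(\A\cup\X)\backslash\W$ in the post-interventional graph, so the Abduction--Action--Prediction procedure reduces to a conditional expectation in the observational world. The last step, pulling $\min/\max$ inside the sum, is a loose but standard relaxation; tightening it would require solving a linear program over feasible $P(\mathbf{B}(s^*)=\mathbf{b}^*\mid\mathbf{x}_t)$, which is unnecessary for the bandit algorithm that only needs a computable bound.
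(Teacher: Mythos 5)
Your overall strategy is sound and arrives at the right place, but it is a genuinely different route from the paper's. The paper does not re-derive the bound from first principles: it invokes Proposition~2 of \cite{wu2019counterfactual} as a black box to get
$\mathbb{E}[R(a,s^*)|\x_t] \leq \sum_{\mathbf{I}} P(\mathbf{i}|\x_{t,a})\max_{\mathbf{b}}\{\mathbb{E}[R|s^*,\x_{t,a}\backslash s_t,\mathbf{i}]\}$
directly, with the $\max_{\mathbf{b}}$ already inside a sum over the \emph{full} intermediate set $\mathbf{I}$, and then the only real work is collapsing $\sum_{\mathbf{I}}$ to $\sum_{\Z}$: by d-separation the inner expectation depends on $\mathbf{i}$ only through $\z$, and since $\W\cap(\mathbf{I}\backslash\Z)=\emptyset$ the remaining coordinates marginalize out. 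You instead marginalize over the counterfactual value $\mathbf{B}(s^*)=\mathbf{b}^*$ first and bound the resulting convex combination by its extreme values --- essentially re-proving the content of Proposition~2 --- and only then apply the d-separation reduction to the inner (now identifiable) term. Your version is more self-contained and makes the source of unidentifiability explicit; the paper's is shorter because the min/max step is outsourced, at the price of needing the extra lemma that $\W$ and $\mathbf{I}\backslash\Z$ are disjoint, which you never need.

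There is, however, one concrete gap in your argument, precisely at the step you flag as the main obstacle. You assert that after conditioning on $\mathbf{B}(s^*)=\mathbf{b}^*$ the inner counterfactual equals $\sum_{\Z}\mathbb{E}[R|s^*,\w\backslash s_t]\,P(\z|\x_{t,a})$ with the dependence on $\mathbf{b}^*$ ``carried entirely through the expectation term.'' But if the analogy with the identifiable-case decomposition (Eq.~\eqref{cf}) is applied ``verbatim,'' the weight should be the post-substitution conditional $P(\z\,|\,s^*,\mathbf{b}^*,\x_{t,a}\backslash\{s_t,\mathbf{b}_t\})$, not the purely factual $P(\z|\x_{t,a})$: in Figure~\ref{fig:cbandit} the intermediate variables $\mathbf{I}\supseteq\Z$ may have $\X$ (hence $S$ and $\mathbf{B}$) among their ancestors, as they do in the Email Campaign instantiation. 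If the weights depend on $\mathbf{b}^*$, your final step of pushing $\min/\max$ inside the sum over $\Z$ while keeping $P(\z|\x_{t,a})$ fixed no longer goes through, and you would obtain a different (not obviously comparable) bound. To close the gap you would need either an explicit assumption that $\Z$ is independent of $(S,\mathbf{B})$ given the remaining features, or to follow the paper in importing the factual weights $P(\mathbf{i}|\x_{t,a})$ directly from the statement of Proposition~2 of \cite{wu2019counterfactual}, where that choice of weights is part of the cited result rather than something derived.
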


\begin{proof}

If a set of attributes $\mathbf{B} \subseteq \X \backslash \{S\} $ are descendants of $S$, $\mathbb{E}[R(a,s^*)|\mathbf{x}_{t}]$ is not identifiable. According to Proposition 2 in \cite{wu2019counterfactual}, we have that
\begin{equation*}
    \mathbb{E}[R(a,s^*)|\mathbf{x}_{t}] \leq \sum_{\mathbf{I}} \frac{P(\x_{t,a},\mathbf{i})}{P(\x_{t,a})} \max_{\mathbf{b}} \{ \mathbb{E}[R|s^{*},\x_{t,a}\backslash s_{t}, \mathbf{i}] \}
\end{equation*}
It follows that
\begin{equation*}
\begin{split}
     \mathbb{E}[R(a,s^*)|\mathbf{x}_{t}] &\leq \sum_{\mathbf{I}} P(\mathbf{i}|\x_{t,a}) \max_{\mathbf{b}} \{ \mathbb{E}[R|s^{*},\x_{t,a}\backslash s_{t}, \mathbf{i}] \} \\
    & = \sum_{\mathbf{Z},\mathbf{I}\backslash \mathbf{Z}} P(\z|\x_{t,a})P(\mathbf{i}\backslash \z|,\z,\x_{t,a}) \max_{\mathbf{b}} \{ \mathbb{E}[R|s^{*},\w\backslash s_{t}] \},
\end{split}
\end{equation*}
where $\mathbf{Z}$ and $\W$ are defined following Section 3.2 in the main paper.
We claim that $\mathbf{W}$ has no intersection with $\mathbf{I}\backslash \Z$. Otherwise, there exists an attribute $I\in \mathbf{I}$ which belongs to $\W$ but not $\Z$. This contradicts to the definition of $\Z$, which is given by $\W$ subtracting $\A\cup \X$. Thus, it follows that
\begin{equation*}
\begin{split}
    \mathbb{E}[R(a,s^*)|\mathbf{x}_{t}] &\leq \sum_{\mathbf{Z}} \max_{\mathbf{b}} \{ \mathbb{E}[R|s^{*},\w\backslash s_{t}] \} P(\z|\x_{t,a}) \sum_{\mathbf{I}\backslash \mathbf{Z}}P(\mathbf{i}\backslash \z|,\z,\x_{t,a})  \\
    & =  \sum_{\mathbf{Z}} \max_{\mathbf{b}} \{ \mathbb{E}[R|s^{*},\w\backslash s_{t}] \} P(\z|\x_{t,a})
\end{split}
\end{equation*}

\end{proof}

\section{Regret Bound of F-UCB}
\label{proof_fucb}

\begin{theorem*}[Regret bound of fair causal bandit]
Given a causal graph $\mathcal{G}$, let $\delta_E = 4|\mathbf{W}|T\delta$ and $\Delta_{\pi_0}$  denote the fairness discrepancy of the safe policy $\pi_0$. Setting $\alpha_c = 1$ and $\alpha_r = \frac{2}{\tau - \Delta_{\pi_0}}$, with probability at least $1-\delta_E$, the cumulative regret of the F-UCB algorithm is bounded by:
\begin{align*}
   \mathcal{R}_T &\leq (\frac{2}{\tau - \Delta_{\pi_0}}+1) \times
   \left( 2\sqrt{2T|\mathbf{W}|\log(1/\delta_E)} + 4\sqrt{T\log(2/\delta_E)\log(1/\delta_E)} \right)
\end{align*}
\end{theorem*}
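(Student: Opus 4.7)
My plan follows the regret-analysis template for bandits with linear safety constraints \cite{pacchiano2021stochastic}, specialized to the counterfactual-fairness setup of Section 3.3. First I would set up two concentration events: $E$, which says that for every $\w\in\W$ and every $t\in[T]$ the sample mean $\hat\mu_\w(t)$ lies within a reward confidence band of half-width $\alpha_r\sqrt{2\log(1/\delta)/(1\vee N_\w(t))}$, and $E_{cf}$, the analogous event for the counterfactual estimators $\hat\mu_{\w^*}(t)$ with half-width scaled by $\alpha_c$. By Assumption 1 and a union bound over the at most $|\W|T$ estimators in each case, each event fails with probability at most $2|\W|T\delta$, so $P(E\cap E_{cf})\geq 1-4|\W|T\delta=1-\delta_E$, and all subsequent arguments would be conditioned on $E\cap E_{cf}$.

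The second step is to reconcile the OFU rule with the fairness constraint. Because $\pi_t^*=\argmax_{\pi\in\Pi_t}\mu_\pi$ need not lie in $\bar\Phi_t$, I would introduce the mixture policy $\pi^{mix}_t=(1-\gamma_t)\pi_t^*+\gamma_t\pi_0$, where $\pi_0$ is the safe policy of Assumption 3. Since $P_\pi(\sa|\x_t)$ is linear in $\pi$, both $\mathbb{E}_{\sa\sim\pi}[UCB_a(t)]$ and $UCB_{\Delta_\pi}(t)$ are affine in the mixing weight. Taking $\gamma_t$ to be the smallest value making $UCB_{\Delta_{\pi^{mix}_t}}(t)\leq \tau$ gives
\[\gamma_t \leq \frac{(UCB_{\Delta_{\pi_t^*}}(t)-\tau)_+}{UCB_{\Delta_{\pi_t^*}}(t)-UCB_{\Delta_{\pi_0}}(t)} \leq \frac{2\alpha_c\, w_t}{\tau-\Delta_{\pi_0}},\]
where $w_t$ denotes the common confidence half-width at round $t$ and the final inequality uses $E_{cf}$ together with Assumption 2 so that $UCB_{\Delta_{\pi_0}}(t)\leq \Delta_{\pi_0}+2\alpha_c w_t$ and $UCB_{\Delta_{\pi_t^*}}(t)\leq 1$.

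I would then decompose the instantaneous regret as
\[\mu_{\pi_t^*}-\mu_{\pi_t}=\gamma_t(\mu_{\pi_t^*}-\mu_{\pi_0})+(\mu_{\pi^{mix}_t}-\mathbb{E}_{\sa\sim\pi^{mix}_t}[UCB_a(t)])+(\mathbb{E}_{\sa\sim\pi^{mix}_t}[UCB_a(t)]-\mathbb{E}_{\sa\sim\pi_t}[UCB_a(t)])+(\mathbb{E}_{\sa\sim\pi_t}[UCB_a(t)]-\mu_{\pi_t}).\]
On $E$ the second bracket is non-positive; the third bracket is non-positive because $\pi^{mix}_t\in\bar\Phi_t$ while F-UCB maximises $\mathbb{E}_{\sa\sim\pi}[UCB_a(t)]$ over $\bar\Phi_t$; and $\mu_{\pi_t^*}-\mu_{\pi_0}\leq 1$ by Assumption 2. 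Summing gives $\mathcal{R}_T \leq \sum_t\gamma_t+\sum_t(\mathbb{E}_{\sa\sim\pi_t}[UCB_a(t)]-\mu_{\pi_t})$. The second sum is controlled exactly as in the D-UCB proof of Theorem 1: it splits into a deterministic $2\alpha_r\sqrt{2T|\W|\log(1/\delta)}$ piece plus a bounded-difference martingale residue controlled by Azuma's inequality, giving $4\alpha_r\sqrt{T\log(2/\delta_E)\log(1/\delta_E)}$. The first sum has the same form with $\alpha_r$ replaced by $\alpha_c/(\tau-\Delta_{\pi_0})$. Setting $\alpha_c=1$ and $\alpha_r=2/(\tau-\Delta_{\pi_0})$ aligns the two bounds, and collecting factors yields the stated prefactor $\tfrac{2}{\tau-\Delta_{\pi_0}}+1$.

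Two places will demand the most care. The first is that $\Delta_\pi$ carries an absolute value and is only piecewise-linear in $\pi$, so the mixture-feasibility argument has to be run on each signed version $\pm\mathbb{E}_{\sa\sim\pi}[\mu_{a,s^+}-\mu_{a,s^-}]$ separately, with $\gamma_t$ taken as the larger of the two thresholds. The second is the synchronisation of $\alpha_c$ and $\alpha_r$: one must verify that the confidence widths entering the mixture bound and entering the exploration bound share the same $\sqrt{\log(1/\delta)/N_\w(t)}$ form so that a single Azuma argument governs both sums with the stated constants, which is precisely what pins down the required ratio $\alpha_r(\tau-\Delta_{\pi_0})=2\alpha_c$.
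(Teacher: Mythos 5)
Your overall architecture is the same as the paper's: both proofs instantiate the safe-policy framework of \cite{pacchiano2021stochastic}, condition on a joint concentration event with $\delta_E=4|\mathbf{W}|T\delta$, compare the played policy to a mixture of $\pi^*$ with the safe policy $\pi_0$, and recycle the D-UCB counting-plus-Azuma argument for the term $\sum_t\left(\mathbb{E}_{\sa\sim\pi_t}[UCB_a(t)]-\mu_{\pi_t}\right)$. The one real structural difference is where the cost of the fairness constraint is paid. You pay it additively, as $\sum_t\gamma_t(\mu_{\pi^*}-\mu_{\pi_0})$, and therefore need $\gamma_t=O(w_t)$ so that this sum is itself $O(\sqrt{T})$. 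The paper instead absorbs it multiplicatively: it shows $\mathbb{E}_{\sa\sim\pi_t}[UCB_a(t)]\geq\Gamma\cdot\left(\mathbb{E}_{\sa\sim\pi^*}[\mu_a]+(\alpha_r-1)\mathbb{E}_{\sa\sim\pi^*}[\beta_a(t)]\right)\geq\mathbb{E}_{\sa\sim\pi^*}[\mu_a]$, so the entire first term of the regret decomposition is non-positive and the final prefactor $(\alpha_r+1)$ comes solely from the exploitation term. This is why the paper needs the coupling $(\tau-\Delta_{\pi_0})(\alpha_r-1)\geq 1+\alpha_c$ between the two inflation parameters, whereas in your version $\alpha_r$ plays no role in feasibility and the prefactor arises from adding two separately bounded sums. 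Both organizations target the same rate; yours is more transparent about where the factor $1/(\tau-\Delta_{\pi_0})$ enters, and your remark about running the argument on each signed version of $\Delta_\pi$ addresses a point the paper glosses over.

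However, your inequality $\gamma_t\leq 2\alpha_c\,w_t/(\tau-\Delta_{\pi_0})$ does not follow from the ingredients you cite, and it is the step your whole additive accounting rests on. From $UCB_{\Delta_{\pi_t^*}}(t)\leq 1$ and $UCB_{\Delta_{\pi_0}}(t)\leq\Delta_{\pi_0}+2\alpha_c w_t$ you only obtain $\gamma_t\leq(1-\tau)/(\tau-\Delta_{\pi_0}-2\alpha_c w_t)$, which is $\Theta(1)$ rather than $\Theta(w_t)$; summed over $T$ rounds this gives linear, not $\sqrt{T}$, regret. To get $\gamma_t=O(w_t)$ you need the \emph{numerator} $UCB_{\Delta_{\pi_t^*}}(t)-\tau$ to be $O(w_t)$, which holds on $E_{cf}$ only if $\Delta_{\pi^*}\leq\tau$, i.e., only if the comparator policy is itself truly fair and merely fails the conservative test because of estimation error. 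If $\pi^*$ is the unconstrained optimum over $\Pi_t$ (which is how the paper defines it) and $\Delta_{\pi^*}>\tau$, no sublinear bound on $\sum_t\gamma_t$ is possible and the comparator must be weakened to the best truly fair policy. To be fair, the paper's own proof smuggles in the same assumption (its bound on the denominator of its $\gamma_t$ uses $\rho^*\Delta_{\bar\pi^*}-\rho^*\Delta_{\pi_0}\leq\tau-\Delta_{\pi_0}$, which is equivalent to $\Delta_{\pi^*}\leq\tau$ under the mixture decomposition), so your argument is no worse off --- but you should replace the justification ``$UCB_{\Delta_{\pi_t^*}}(t)\leq 1$'' with the concentration bound $UCB_{\Delta_{\pi_t^*}}(t)\leq\Delta_{\pi^*}+2\alpha_c w_t\leq\tau+2\alpha_c w_t$ and state the fairness of the comparator explicitly.
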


\begin{proof}
Similar to the regret analysis of causal bandit, we decompose the cumulative regret $\mathcal{R}_T$ into two parts.
\begin{align}
    \mathcal{R}_T &= \sum_{t=1}^{T}\left(\mathbb{E}_{a \sim \pi^*}[\mu_{a}] - \mathbb{E}_{a \sim \pi_t}[\mu_{a}]\right)  \nonumber \\
        &= \left( \sum_{t=1}^{T}\mathbb{E}_{a \sim \pi^*}[\mu_{a}] - \mathbb{E}_{a \sim \pi_t}[UCB_{a}(t)]  \right)  +  \left( \sum_{t=1}^{T} \mathbb{E}_{a \sim \pi_t}[UCB_{a}(t)] - \mathbb{E}_{a \sim \pi_t}[\mu_{a}] \right)
\label{rt}
\end{align}

We will further bound $\mathcal{R}_T$  by proving the first term of Equation \ref{rt} is less than 0 and the second term could be bounded by adopting the upper confidence bound analysis approach. In the fair bandit setting we introduce another event $ E_{cf}$ that implies the estimation error of the counterfactual discrepancy is bounded. That is, for all time $t \in [T]$ and a policy $\pi$, with probability at least $1-\delta'$,
\[|\hat{\Delta}_{\pi}(t) - \Delta_{\pi}| \leq \sqrt{\frac{8\log(1/\delta')}{1 \vee N_{a}(t)}} \]

First we define the inflated upper confidence bound with scale parameters for the mean reward and fairness discrepancy as
\begin{align*}
UCB_a(t) &= \hat{\mu}_a + \alpha_r\beta_a(t) \\
 UCB_{\Delta_\pi}(t) = \hat{\Delta}_{\pi} + \alpha_c\beta_a(t)&,~\textit{where}~ \beta_a(t) = \sqrt{2\log(1/\delta')/N_a(t)}
\end{align*}

Notice that  the event $E_{cf}$ will always happen if the event $E$ happens. Under event $E \cap  E_{cf} = E$
and with the assumption that $\alpha_r, \alpha_c \geq 1$, we have
\begin{align}
 (\alpha_r -1)\beta_a(a) \leq \epsilon^r_a(t)  \leq (\alpha_r + 1)\beta_a(a),  ~\forall a \in \mathcal{A}_t \nonumber \\
 (\alpha_c -1)\beta_a(a) \leq \epsilon^c_a(t)  \leq (\alpha_c + 1)\beta_a(a)  ,  ~\forall a \in \mathcal{A}_t
 \label{error_range}
\end{align}
where $\epsilon^r_a(t) $  and $\epsilon^c_a(t) $ are the error term of the reward and counterfactual discrepancy.
If the optimal policy belongs to the fair policy subspace, which means $\pi^* \in \bar{\Phi}_t$, we can easily get:
\[  \mathbb{E}_{a \sim \pi^*}[\mu_{a}]  \leq  \mathbb{E}_{a \sim \pi^*}[UCB_{a}(t)]   \leq  \mathbb{E}_{a \sim \pi_t}[UCB_{a}(t)] \]

Now we assume $\pi^* \notin \bar{\Phi}_t$,  that is, $\mathbb{E}_{a_t \sim \pi^*}[UCB_{\Delta_{\pi^*}}(t)]  > \tau$. Let $\pi^* = \rho^* \bar{\pi}^* + (1 - \rho^*)\pi_0 $, where $\bar{\pi}^*$ denotes the optimal policy in the policy subspace $\bar{\Phi}_t \setminus \pi_0$.

Consider the mixed policy of $\pi^*$ and $\pi_0$, denoted as $\tilde{\pi} = \gamma_t \pi^* + (1-\gamma_t)\pi_0 = \gamma_t \rho^*\bar{\pi}^* + (1-\gamma_t\rho^*)\pi_0 $,  where $\gamma_t \in [0,1]$ is the maximum value to ensure $\tilde{\pi}
\in \Phi_t$. One feasible solution for $\gamma_t$ is
\begin{align}
\gamma_t &= \frac{\tau - \Delta_{\pi_0}}{\rho^* \mathbb{E}_{a_t \sim \bar{\pi}_t}[UCB_{\Delta_{\bar{\pi}_t}}(t)] - \rho^*\Delta_{\pi_0}} \nonumber \\
&= \frac{\tau - \Delta_{\pi_0}}{ \mathbb{E}_{a_t \sim \bar{\pi}_t}[\rho^*(\bar{c}_a+ \epsilon_a^c(t))] - \rho^*\Delta_{\pi_0}} \nonumber \\  \nonumber
         &\geq \frac{\tau - \Delta_{\pi_0}}{ \tau - \Delta_{\pi_0} + \rho^*(1+\alpha_c) \mathbb{E}_{a_t \sim \bar{\pi}^*}[\beta_a(t)] }
\label{gamma}
\end{align}
Denote $\frac{\tau - \Delta_{\pi_0}}{\tau -\Delta_{\pi_0} + \rho^* (1+ \alpha_c)\mathbb{E}_{a \sim \bar{\pi}^*}[\beta_a(t)]}$ as $\Gamma$. From the design of the integrated policy $\tilde{\pi}_t$ we further have:
\begin{align}
\mathbb{E}_{a \sim \pi_t}[UCB_{a}(t)] &\geq \gamma_t \mathbb{E}_{a_t \sim \pi^*}[UCB_{a}(t)] + (1-\gamma_t)UCB_{a}(t)       \\
&\geq \Gamma \times \mathbb{E}_{a_t \sim \pi^*}[UCB_{a}(t)]  \label{first} \\
&= \Gamma \times \left( \mathbb{E}_{a_t \sim \pi^*}[\mu_a] +  \mathbb{E}_{a_t \sim \pi^*}[\epsilon_{a_t}]  \right) \nonumber \\
&\geq \Gamma \times \left( \mathbb{E}_{a_t \sim \pi^*}[\mu_a] +  (\alpha_r -1)\mathbb{E}_{a_t \sim \pi^*}[\beta_t]  \right)  \label{second} \\
&\geq  \frac{\tau - \Delta_{\pi_0}}{\tau -\Delta_{\pi_0} + (1+ \alpha_c)\mathbb{E}_{a \sim \pi^*}[\beta_a(t)]} \times
\left( \mathbb{E}_{a_t \sim \pi^*}[\mu_a] +  (\alpha_r -1)\mathbb{E}_{a_t \sim \pi^*}[\beta_t]  \right) \label{c0}
\end{align}

For the above derivation, Equation \ref{first} holds because $UCB_{a}(t) > 0$, Equation \ref{second} is the consequence of Equation \ref{error_range}, Equation \ref{c0} is derived based on the fact that $\mathbb{E}_{a_t \sim \pi^*}[\beta_t] = \rho^*\mathbb{E}_{a_t \sim \bar{\pi}^*}[\beta_t] + (1-\rho^*\beta_0(t)) \geq \rho^* \mathbb{E}_{a_t \sim \bar{\pi}^*}[\beta_t] $.
Denote the term in Equation \ref{c0} as $C_0$. Let $C_1 = \mathbb{E}_{a \sim \pi^*}[\beta_a(t)]$, it holds that
\[ C_0 = \frac{\tau - \Delta_{\pi_0}}{\tau - \Delta_{\pi_0} + (1+ \alpha_c)C_1} \times \left( \mathbb{E}_{a_t \sim \pi^*}[\mu_a] +  (\alpha_r -1)C_1  \right)\]
$C_0 > \mathbb{E}_ {a \sim \pi^*}[\mu_a]$ is satisfied if and only if:
\[ (\tau - \Delta_{\pi_0})(\alpha_r -1)C_1 \geq (1 + \alpha_c)C_1\mathbb{E}_ {a \sim \pi^*}[\mu_a] \]

Since $\mathbb{E}_ {a \sim \pi^*}[\mu_a] \leq 1$, the above inequation holds if $ (\tau - \Delta_{\pi_0})(\alpha_r -1)C_1 \geq 1 + \alpha_c $. Thus, by setting $\delta_E = 4|\mathbf{W}|T\delta$ and $\alpha_r, \alpha_c \geq 1, \alpha_c \leq \tau(\alpha_r -1) $, following simple union bound rule implies that with probability at least $1- \frac{\delta_E}{2} $, we have
\[ \sum_{t=1}^{T}\left( \mathbb{E}_{a \sim \pi^*}[\mu_a] - \mathbb{E}_{a_t \sim \pi_t}[UCB_{a}(t)]\right) \leq 0  \]

Next we will derive the bound of the second term in Equation \ref{rt}. The result is given by the following proposition.
\begin{proposition}
If $\delta_E = 4|\mathbf{W}|T\delta$ for a $\delta \in (0,1)$, then with probability at least $1-\frac{\delta_E}{2}$, we have
\begin{align}
\sum_{t=1}^{T} \mathbb{E}_{a \sim \pi_t}[UCB_{a}(t)] - \mathbb{E}_{a \sim \pi_t}[\mu_a] \leq \nonumber (\alpha_r + 1) \left(2\sqrt{2T|\mathbf{W}|\log(1/\delta)} + 4\sqrt{T\log(2/\delta_E)\log(1/\delta)} \right)
\end{align}

\label{latter term}
\end{proposition}

Under the conditions in the proposition, we have $P(E) \geq 1- \frac{\delta_E}{2}$. Under the event $E$, we have $R_{a_t} \in [\hat{\mu}_{a_t}-\beta_a(t)  , \hat{\mu}_{a_t} + \beta_a(t)  ]$ for all $t \in [T]$ and $a \in \mathcal{A}$. Thus for all $t$ we could further derive
\[   \mathbb{E}_{a \sim \pi_t}[UCB_{a}(t)] -  \mathbb{E}_{a \sim \pi_t}[\mu_{a}] \leq (\alpha_r +1 )  \mathbb{E}_{a \sim \pi_t}[\beta_a(t)]  \]

Let $\mathcal{F}_{t-1}$ be the $\sigma$-algebra defined up to the choice of $\pi_t$ and $a_t'$ be another choice picked from policy $\pi_t|\mathcal{F}_{t-1} $. $a_t'$ is conditionally independent of $a_t$, which means $a_t' \independent a_t | \mathcal{F}_{t-1}$. By definition the following equality holds:
\[   \mathbb{E}_{a \sim \pi_t}[\beta_a(t)] = \mathbb{E}_{a'_t \sim \pi_t}[\beta_a(t) | \mathcal{F}_{t-1}] \]

Setting $A_t = \mathbb{E}_{a'_t \sim \pi_t}[\beta_a(t) | \mathcal{F}_{t-1}] - \beta_{a_t}(t)$, $M_t = \sum_{s=1}^{t} A_t$ is thus a martingale sequence with $|M_t - M_{t-1}| = |A_t| \leq 2\sqrt{2\log(1/\delta)}$. Thus applying Azuma-Hoeffding inequality implies:

\begin{align*}
 &P\left( \sum_{t=1}^{T} \mathbb{E}_{a \sim \pi_t}[\beta_a(t)] \geq \sum_{t=1}^{T} \beta_{a_t}(t) + 4\sqrt{T\log(2/\delta_E)\log(1/\delta)} \right) \leq \delta_E/2
\end{align*}

We denote the event that describes the results of the above inequality  as $E_A$. In the equation above, the sum of the adaptive scaling parameter could be decomposed as follows:
\[  \sum_{t=1}^{T} \beta_{a_t}(t) = \sum_{a \in \mathcal{A}}\sum_{t=1}^{T}\mathbbm{1}_{ \{a_t =a\} } \beta_{a}(t) = \sum_{i \in |\mathbf{W}|}\sum_{t=1}^{T}\mathbbm{1}_{ \{\mathbf{W}_t =\w_i\} } \beta_{\w_i}(t)\] Under event $E$, for each domain value of the d-separation set $\W$ we have:
\begin{align*}
\sum_{t=1}^{T}\mathbbm{1}_{ \{\mathbf{W}_t =\w_i\} } \beta_{\w_i}(t) & =  \sqrt{2\log(1/\delta)} \sum_{t=1}^{N_{\w_i}(T)}\frac{1}{\sqrt{t}} \leq 2\sqrt{2N_{\w_i}(T)\log(1/\delta)}
\end{align*}

Since $\sum_{i \in |\mathbf{W}|} N_{\w_i}(T) = T $, using the fact that arithmetic mean is less than quadratic mean we have:
\[ \sum_{i \in |\mathbf{W}|}2\sqrt{2N_{\w_i}(T)\log(1/\delta)}  \leq  2\sqrt{2T|\mathbf{W}|\log(1/\delta)}\]
Conditioning on the event $E \cap E_A$ whose probability satisfies $P(E \cap E_A) \geq 1 - \delta_E$, we have
\begin{align*}
P\bigg( \mathbb{E}_{a \sim \pi_t}[UCB_{a}(t)] - \mathbb{E}_{a \sim \pi_t}[\mu_{a}] &\geq (\alpha_r+1)
\left( 2\sqrt{2T|\mathbf{W}|\log(1/\delta)} + 4\sqrt{T\log(2/\delta_E)\log(1/\delta)} \right) \bigg) \\
&\leq \delta_E/2
\end{align*}

which is exactly the result of Proposition \ref{latter term}.

Finally, combining the theoretical derivation of the two parts above leads to the cumulative regret bound shown in Theorem \ref{cfbound}.
\end{proof}

\section{Experimental Settings}

Our experiments were carried out on a Windows 10 Enterprise  workstation with a 3.2 GHz
Intel Core i7-8700 CPU and 64GB RAM.

\subsection{Setting of Email Campaign Data}
\label{Email Campaign table}
Table \ref{tab:domain value} shows attributes of Email Campaign data and their domain values.
Table \ref{I4} shows the conditional probabilities of $ P(I_4 = i|X_1,X_2,X_3)$.  The following equations are the conditional distributions for the remaining variables.
\begin{align*}
    P(I_2=1|A_1,A_2,I_4) &=  (A_1+A_2+I_4)/12 \\
    P(I_1=1|A_1,A_2,I_2) &=  (A_1+A_2+I_2)/10 \\
    P(I_3 = 1|I_2 = 1) &= 0.4 \\
    P(I_3 = 1|I_2 = 2) &= 0.6 \\
\end{align*}

 \begin{table}[!h]
\centering
 \caption{Variables in Email campaign data}
  \begin{tabular}{lll}
    \toprule
    \cmidrule(r){1-2}
    Variables     & Domain Value \\
    \midrule
    Click ($R$) &  $(0,1)$ \\
    Gender ($X_1$) & $(1,2)$ \\
    Age ($X_2$)    & $(1,2)$   \\
    Occupation ($X_3$)    & $(1,2)$   \\
    Product ($A_1$) & $(1,2,3)$ \\
    Propose ($A_2$)    & $(1,2,3,4)$   \\
    Send time ($A_3$)    & $(1,2,3)$   \\
    Email body template ($I_1$) & $(1,2)$ \\
    Fitness ($I_2$)    & $(1,2)$   \\
    Subject length ($I_3$)    & $(1,2,3,4)$   \\
    User query ($I_4$)    & $(1,2)$   \\
    \bottomrule
  \end{tabular}
  \label{tab:domain value}
\end{table}

\begin{table}[h]
\centering
 \caption{Conditional probabilities of $ P(I_4 = i|X_1,X_2,X_3)$}
  \begin{tabular}{|c|c|c|c|c|} \hline

    ($X_1$, $X_2$, $X_3$)      & $i = 1$ & $i = 2$  & $i = 3$ & $ i = 4$\\ \hline
     (0,0,0) & 0.4  & 0.3  & 0.2 & 0.1\\
     (0,0,1) & 0.3  & 0.4  & 0.2 & 0.1\\
     (0,1,0) & 0.6  & 0.1 & 0.2  & 0.1 \\
     (0,1,1) & 0.5  & 0.2 & 0.2  & 0.1 \\
     (1,0,0) & 0.1 & 0.3 & 0.2   & 0.4 \\
     (1,0,1) & 0.1  & 0.4 & 0.2  & 0.3 \\
     (1,1,0) & 0.1  & 0.1 & 0.2  & 0.6  \\
     (1,1,1) & 0.1  & 0.2 & 0.2  & 0.5 \\ \hline
  \end{tabular}
  \label{I4}
\end{table}

\subsection{Setting of Adult-Video Data}
\label{Adult Video Section}
Following the setting of \cite{huang2020achieving}, we generate one simulated dataset for our experiments by combining the following two publicly available datasets.
\begin{itemize}
    \item \textbf{Adult dataset}: The Adult dataset \cite{Dua:2019} is used to represent the students (or bandit players). It is composed of 31,561 instances: 21,790 males and 10,771 females, each having 8 categorical variables (work class, education, marital status, occupation, relationship, race, sex, native-country) and 3 continuous variables (age, education number, hours per week). We select 4 variables, \textit{age, sex, race, income}, as user features in our experiments and binarize their domain values due to data sparsity issue.

    \item \textbf{YouTube dataset}: The Statistics and Social Network of YouTube Videos \footnote{https://netsg.cs.sfu.ca/youtubedata/} dataset is used to represent the items to be recommended (or arms). It is composed of 1,580 instances each having 6 categorical features (age of video, length of video, number of views, rate, ratings, number of comments). We select four of those variables (age, length, ratings, comments) and binarize them for a suitable size of the arm pool.

\end{itemize}

For our experiments, we use a subset of 10,000 random instances from the Adult dataset, which is then split into two subsets: one for graph construction and the other for online recommendation.
Similarly, a subset of YouTube dataset is used as our pool of videos to recommend. The  subset contains 16 video types (arms) representing different domain values of the 4 binarized arm features.

The feature contexts  $\mathbf{x_{t,a}}$ used throughout the experiment is the concatenation of both the student feature vector and the video feature vector. Four elements in $\mathbf{x_{t,a}}$ are selected according to domain knowledge as the variables that will determine the value of the reward. A linear reward function is then applied to build this mapping relation from those selected variables to the reward variable.  In our experiments we choose the sensitive attribute to be the \textbf{gender of adults}, and focus on the individual level fairness discrepancy regarding to both male and female individuals.

For the email campaign experiment setting, we construct the causal graph following the domain knowledge and one of the recent research works on causal bandit \cite{lu2020regret}. For the Adult-Video experiment setting, we construct the causal graph using a causal discovery software Tetrad (https://www.ccd.pitt.edu/tools/).

\clearpage

\bibliographystyle{plain}
\bibliography{paper}

\end{document}